\newcolumntype{C}[1]{>{\Centering}m{#1}}
\title{Implicit Policy for Reinforcement Learning}
\author{
  %% \AND
   Yunhao Tang \\
  Columbia University \\
  %% Address \\
  yt2541@columbia.edu \\
  \And
  Shipra Agrawal \footnote{} \\
  Columbia University \\
  %% Address \\
  sa3305@columbia.edu \\
  %% \And
  %% Coauthor \\
  %% Affiliation \\
  %% Address \\
  %% \texttt{email} \\
}
\begin{document}
% \nipsfinalcopy is no longer used

\maketitle

\begin{abstract}
We introduce \emph{Implicit Policy}, a general class of expressive policies that can flexibly represent complex action distributions in reinforcement learning, with efficient algorithms to compute entropy regularized  policy gradients. We empirically show that, despite its simplicity in implementation, entropy regularization combined with a rich policy class can attain desirable properties displayed under maximum entropy reinforcement learning framework, such as robustness and multi-modality. 
\end{abstract}

\section{Introduction}

Reinforcement Learning (RL) combined with deep neural networks have led to a wide range of successful applications, including the game of Go, robotics control and video game playing \cite{silver2016,schulman2015,mnih2013}. During the training of deep RL agent, the injection of noise into the learning procedure can usually prevent the agent from premature convergence to bad locally optimal solutions, for example, by entropy regularization \cite{schulman2015,mnih2016} or by explicitly optimizing a maximum entropy objective \cite{tuomas2017,nachum2017}.

Though entropy regularization is much simpler to implement in practice, it greedily optimizes the policy entropy at each time step, without accounting for future effects. On the other hand, maximum entropy objective considers the entropy of the distribution over entire trajectories, and is more conducive to theoretical analysis \cite{asadi2017}. Recently, \cite{tuomas2017,tuomas2018} also shows that optimizing the maximum entropy objective can lead to desirable properties such as robustness and multi-modal policy.

Can we preserve the simplicity of entropy regularization while attaining desirable properties under maximum entropy framework? To achieve this, a necessary condition is an expressive representation of policy. Though various flexible probabilistic models have been proposed in generative modeling \cite{goodfellow2015,dustin2017}, such models are under-explored in policy based RL. To address such issues, we propose flexible policy classes and efficient algorithms to compute entropy regularized policy gradients.

In Section 3, we introduce \emph{Implicit Policy}, a generic policy representation from which we derive two expressive policy classes, Normalizing Flows Policy (\emph{NFP}) and more generally, Non-invertible Blackbox Policy (\emph{NBP}). \emph{NFP} provides a novel architecture that embeds state information into Normalizing Flows; \emph{NBP} assumes little about policy architecture, yet we propose algorithms to efficiently compute entropy regularized policy gradients when the policy density is not accessible. In Section 4, we show that entropy regularization optimizes a lower bound of maximum entropy objective. In Section 5, we show that when combined with entropy regularization, expressive policies achieve competitive performance on benchmarks and leads to robust and multi-modal policies.

\section{Preliminaries}
\subsection{Background}
We consider the standard RL formalism consisting of an agent interacting with the environment. At time step $t\geq0$, the agent is in state $s_t\in\mathcal{S}$, takes action $a_t\in\mathcal{A}$, receives instant reward $r_t\in\mathbb{R}$ and transitions to next state $s_{t+1}\sim p(s_{t+1}|s_t,a_t)$. Let $\pi:\mathcal{S}\mapsto \mathcal{A}$ be a policy. The objective of RL is to search for a policy which maximizes cumulative expected reward  $J(\pi) = \mathbb{E}_\pi\big[\sum_{t=0}^\infty r_t \gamma^t  \big]$, where $\gamma \in (0,1]$ is a discount factor. The action value function of policy $\pi$ is defined as $Q^\pi(s,a) = \mathbb{E}_\pi\big[\sum_{t=0}^\infty r_t\gamma^t | s_0 = s, a_0 = a\big]$. In policy based RL, a policy is explicitly parameterized as $\pi_\theta$ with parameter $\theta$, and the policy can be updated by policy gradients $\theta \leftarrow \theta + \alpha \nabla_\theta J(\pi_\theta)$, where $\alpha$ is the learning rate. So far, there are in general two ways to compute policy gradients for either on-policy or off-policy updates.
\paragraph{Score function gradient \& Pathwise gradient.} Given a stochastic policy $a_t \sim \pi_\theta(\cdot|s_t)$, the score function gradient for on-policy update is computed as $\nabla_\theta J(\pi_\theta) = \mathbb{E}_{\pi_\theta} \big[\sum_{t=0}^\infty Q^{\pi_\theta}(s_t,a_t) \nabla_\theta \log \pi_\theta(a_t|s_t)\big]$ as in \cite{schulman2017,schulman2015,mnih2016,sutton1999}. For off-policy update, it is necessary to introduce importance sampling weights to adjust the distribution difference between the behavior policy and current policy. Given a deterministic policy $a_t = \pi_\theta(s_t)$, the pathwise gradient for on-policy update is computed as $\nabla_\theta J(\pi_\theta) = \mathbb{E}_{\pi_\theta}\big[\sum_{t=0}^\infty \nabla_a Q^{\pi_\theta}(s_t,a) |_{a=\pi_\theta(s_t)}\nabla_\theta \pi_\theta(s_t)\big]$. In practice, this gradient is often computed off-policy \cite{silver2014,silver2016}, where the exact derivation comes from a modified off-policy objective \cite{degris2012}. 
\paragraph{Entropy Regularization.} For on-policy update, it is common to apply entropy regularization \cite{williams1992,donoghue2017,mnih2016,schulman2017}. Let $\mathbb{H}[\pi(\cdot|s)]$ be the entropy of policy $\pi$ at state $s$. The entropy regularized update is
\begin{align}
\theta\leftarrow \theta + \alpha  \{ \nabla_\theta J(\pi_\theta) + \beta \mathbb{E}_{\pi_\theta} \big [ \nabla_\theta \sum_{t=0}^\infty \mathbb{H}[\pi_\theta(\cdot|s_t)]\gamma^t) \big] \},
\label{eq:entropyreg} 
\end{align}
where $\beta > 0$ is a regularization constant. By boosting policy entropy, this update can potentially prevent the policy from premature convergence to bad locally optimal solutions. In Section 3, we will introduce expressive policies that leverage both on-policy/off-policy updates, and algorithms to efficiently compute entropy regularized policy gradients.

\paragraph{Maximum Entropy RL.} In maximum entropy RL formulation, the objective is to maximize the cumulative reward and the policy entropy
$J_{\text{MaxEnt}}(\pi_\theta) = \mathbb{E}_{\pi_\theta}\big[\sum_{t=0}^\infty r_t\gamma^t + \beta \sum_{t=0}^\infty \mathbb{H}[\pi(\cdot|s_t)]\gamma^t\big]$, where $\beta>0$ is a tradeoff constant. Note that $\nabla_\theta J_{\text{MaxEnt}}(\pi_\theta)$ differs from the update in (\ref{eq:entropyreg}) by an exchange of expectation and gradient. The intuition of $J_{\text{MaxEnt}}(\pi_\theta)$ is to achieve high reward while being as random as possible over trajectories. Since there is no simple low variance gradient estimate for $J_{\text{MaxEnt}}(\pi_\theta)$, several previous works \cite{schulman2017,tuomas2017,nachum2017} have proposed to optimize $J_{\text{MaxEnt}}(\pi_\theta)$ primarily using off-policy value based algorithms.

%\cite{tuomas2017} shows that policies trained using maximum entropy objective have desirable properties, such as robustness and multi-modality for exploration and downstream fine-tuning. 

\subsection{Related Work}
A large number of prior works have implemented policy gradient algorithms with entropy regularization \cite{schulman2015,schulman2017,mnih2016,donoghue2017}, which boost exploration by greedily maximizing policy entropy at each time step. In contrast to such greedy procedure, maximum entropy objective considers entropy over the entire policy trajectories \cite{tuomas2017,nachum2017,schulman2017chen}. Though entropy regularization is simpler to implement in practice, \cite{haarnoja2018composable,tuomas2017} argues in favor of maximum entropy objective by showing that trained policies can be robust to noise, which is desirable for real life robotics tasks; and multi-modal, a potentially desired property for exploration and fine-tuning for downstream tasks. However, their training procedure is fairly complex, which consists of training a soft Q function by fixed point iteration and a neural sampler by Stein variational gradient \cite{liu2016}. We argue that properties as robustness and multi-modality are attainable through simple entropy regularized policy gradient algorithms combined with expressive policy representations. 

Prior works have studied the property of maximum entropy objective \cite{nachum2017,ziebart2010}, entropy regularization \cite{donoghue2017} and their connections with variants of operators \cite{asadi2017}. It is commonly believed that entropy regularization greedily maximizes local policy entropy and does not account for how a policy update impacts future states. In Section 4, we show that entropy regularized policy gradient update maximizes a lower bound of maximum entropy objective, given constraints on the differences between consecutive policy iterates. This partially justifies why simple entropy regularization combined with expressive policy classes can achieve competitive empirical performance in practice. 

There is a number of prior works that discuss different policy architectures. The most common policy for continuous control is unimodal Gaussian \cite{schulman2015,schulman2017,mnih2016}. \cite{tuomas2018} discusses mixtures of Gaussian, which can represent multi-modal policies but it is necessary to specify the number of modes in advance. \cite{tuomas2017} also represents a policy using implicit model, but the policy is trained to sample from the soft Q function instead of being trained directly. Recently, we find \cite{haarnoja2018latent} also uses Normalizing Flows to represent policies, but their focus is learning an hierarchy and involves layers of pre-training. Contrary to early works, we propose to represent flexible policies using implicit models/Normalizing Flows and efficient algorithms to train the policy end-to-end. 

Implicit models have been extensively studied in probabilistic inference and generative modeling \cite{goodfellow2015,kingma2013,li2018,dustin2017}. Implicit models define distributions by transforming source noise via a forward pass of neural networks, which in general sacrifice tractable probability density for more expressive representation. Normalizing Flows are a special case of implicit models \cite{rezende2015,dinh2015,dinh2017}, where transformations from source noise to output are invertible and allow for maximum likelihood inference. Borrowing inspirations from prior works, we introduce implicit models into policy representation and empirically show that such rich policy class entails multi-modal behavior during training. In \cite{dustin2017}, GAN \cite{goodfellow2015} is used as an optimal density estimator for likelihood free inference. In our work, we apply similar idea to compute entropy regularization when policy density is not available. 

%Closely related to RL is imitation learning, where the agent does not directly interact with the environment but learns from experts \cite

\section{Implicit Policy for Reinforcement Learning}
We assume the action space $\mathcal{A}$ to be a compact subset of $\mathbb{R}^m$. Any sufficiently smooth stochastic policy can be represented as a blackbox $f_\theta(\cdot)$ with parameter $\theta$ that incorporates state information $s$ and independent source noise $\epsilon$ sampled from a simple distribution $\rho_0(\cdot)$. In state $s$, the action $a$ is sampled by a forward pass in the blackbox.
\begin{align}
a= f_\theta(s,\epsilon), \epsilon \sim \rho_0(\cdot).
\label{eq:genericpolicy}
\end{align}
For example, Gaussian policy is reduced to $a = \sigma_\theta(s) \cdot \epsilon + \mu_\theta(s)$ where $\rho_0$ is standard Gaussian \cite{schulman2015}. In general, the distribution of $a_t$ is implicitly defined: for any set $A$ of  $\mathcal{A}$, $\mathbb{P}(a \in A|s) = \int_{\epsilon:f_\theta(s,\epsilon) = a} \rho_0(\epsilon) d\epsilon$. Let $\pi_\theta(\cdot|s)$ be the density of this distribution\footnote{In future notations, when the context is clear, we use $\pi_\theta(\cdot|s)$ to denote both the density of the policy as well as the policy itself: for example, $a\sim \pi_\theta(\cdot|s)$ means sampling $a$ from the policy; $\log \pi_\theta(a|s)$ means the log density of policy at $a$ in state $s$.}. We call such policy \emph{Implicit Policy} as similar ideas have been previous explored in implicit generative modeling literature \cite{goodfellow2015,li2018,dustin2017}. In the following, we derive two expressive stochastic policy classes following this blackbox formulation, and propose algorithms to efficiently compute entropy regularized policy gradients.
\subsection{Normalizing Flows Policy (NFP)}
We first construct a stochastic policy with Normalizing Flows. Normalizing Flows  \cite{rezende2015,dinh2017} have been applied in variational inference and probabilistic modeling to represent complex distributions. In general, consider transforming a source noise $\epsilon \sim \rho_0(\cdot)$ by a series of invertible nonlinear function $g_{\theta_i}(\cdot),1\leq i\leq K$ each with parameter $\theta_i$, to output a target sample $x$,
\begin{align}
x = g_{\theta_K} \circ g_{\theta_{K-1}} \circ ... \circ g_{\theta_2} \circ g_{\theta_1} (\epsilon).
\label{eq:normflow}
\end{align}
Let $\Sigma_i$ be the Jacobian matrix of $g_\theta(\cdot)$, then the density of $x$ is computed by chain rule,
\begin{align}
\log p(x) = \log p(\epsilon) + \sum_{i=1}^K \log \text{det}(\Sigma_i).
\label{eq:chainrule}
\end{align}
For a general invertible transformation $g_{\theta_i}(\cdot)$, computing $\text{det}(\Sigma_i)$ is expensive. We follow the architecture of \cite{dinh2015} to ensure that $\text{det}(\Sigma_i)$ is computed in linear time. To combine state information, we embed state $s$ by another neural network $L_{\theta_s}(\cdot)$ with parameter $\theta_s$ and output a state vector $ L_{\theta_s}(s)$ with the same dimension as $\epsilon$. We can then insert the state vector between any two layers of (\ref{eq:normflow}) to make the distribution conditional on state $s$. In our implementation, we insert the state vector after the first transformation (we detail our architecture design in Appendix C).
\begin{align}
a = g_{\theta_K} \circ g_{\theta_{K-1}} \circ ... \circ g_{\theta_2} \circ (L_{\theta_s}(s) + g_{\theta_1} (\epsilon)).
\label{eq:statenormflow}
\end{align}
Though the additive form of $L_{\theta_s}(s)$ and $g_{\theta_1}(\epsilon)$ may in theory limit the capacity of the model, in practice we find the resulting policy still very expressive. For simplicity, we denote the above transformation (\ref{eq:statenormflow}) as $a = f_\theta(s,\epsilon)$ with parameter $\theta = \{\theta_s,\theta_i,1\leq i\leq K\}$. It is obvious that $\epsilon \leftrightarrow a = f_\theta(s,\epsilon)$ is still invertible between $a$ and $\epsilon$, which is critical for computing $\log \pi_\theta(a|s)$ according to (\ref{eq:chainrule}). Such representations build complex policy distributions with explicit probability density $\pi_\theta(\cdot|s)$, and hence entail training using score function gradient estimators.

Since there is no analytic form for entropy, we use samples to estimate entropy by re-parameterization,
 $\mathbb{H}\big[\pi_\theta(\cdot|s)\big] = \mathbb{E}_{a \sim \pi_\theta(\cdot|s)}\big[-\log \pi_\theta(a|s)\big] = \mathbb{E}_{\epsilon \sim \rho_0(\cdot)}\big[-\log \pi_\theta(f_\theta(s,\epsilon)|s)\big]$. The gradient of entropy can be easily computed by a pathwise gradient and easily implemented using back-propagation
 $\nabla_\theta \mathbb{H}\big[\pi_\theta(\cdot|s)\big]  = \mathbb{E}_{\epsilon \sim \rho_0(\cdot)}\big[-\nabla_\theta \log \pi_\theta(f_\theta(s,\epsilon)|s)\big]$. 
\paragraph{On-policy algorithm for NFP.}  Any on-policy policy optimization algorithms can be easily combined with NFP. Since NFP has explicit access to policy density, it allows for training using score function gradient estimators with efficient entropy regularization. 
%(\textcolor{red}{mention that NFP can be trained like NBP and NBP has an on-policy variant. Unimodal policy is normally factorized, no correlation between actions.})
\subsection{Non-invertible Blackbox Policy (NBP)}
The forward pass in (\ref{eq:genericpolicy}) transforms the simple noise distribution $\epsilon \sim \rho_0(\cdot)$ to complex action distribution $a_t  \sim \pi_\theta(\cdot|s_t)$ through the blackbox $f_\theta(\cdot)$. However, the mapping $\epsilon \mapsto a_t$ is in general non-invertible and we do not have access to the density $\pi_\theta(\cdot|s_t)$. We derive a pathwise gradient for such cases and leave all the proof in Appendix A.
\begin{restatable}[Stochastic Pathwise Gradient]{thm}{pathgrad}
\label{thm:pathgrad}
Given an implicit stochastic policy $a_t = f_\theta(s_t,\epsilon), \epsilon \sim\rho_0(\cdot)$. Let $\pi_\theta$ be the implicitly defined policy. Then the pathwise policy gradient for the stochastic policy is
\begin{align}
%\nabla_\theta J(\theta) = \mathbb{E}_{\pi_\theta}\big[ \mathbb{E}_{\epsilon \sim \rho_0(\cdot)}\big[\nabla_a Q^{\pi_\theta}(s_t,a)|_{a=f(s_t,\epsilon)} \nabla_\theta f_\theta(s_t,\epsilon) \big]\big]
\nabla_\theta J(\pi_\theta) &= \mathbb{E}_{\pi_\theta}\big[\mathbb{E}_{\epsilon \sim \rho_0(\cdot)}[\nabla_\theta f_\theta(s,\epsilon) \nabla_a Q^{\pi_\theta}(s,a)|_{a = f_\theta(s,\epsilon)}]\big].
%\nabla_\theta J = \int_\mathcal{S} \rho_\pi(s) \mathbb{E}_{\epsilon \sim \rho_0(\cdot)}\big[ \nabla_\theta f_\theta(s,\epsilon) \nabla_a Q^\pi(s,a) |_{a = f_\theta(s,\epsilon)} \big] ds
 \label{eq:pathwisepolicy}
\end{align}
\end{restatable}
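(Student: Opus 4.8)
The plan is to mimic the recursive unrolling used in the deterministic policy gradient theorem, with the reparameterization $a=f_\theta(s,\epsilon)$ playing the role that lets us push $\nabla_\theta$ through the action. First I would introduce the value function $V^{\pi_\theta}(s)=\mathbb{E}_{a\sim\pi_\theta(\cdot|s)}[Q^{\pi_\theta}(s,a)]$ and rewrite it with the source noise, $V^{\pi_\theta}(s)=\mathbb{E}_{\epsilon\sim\rho_0(\cdot)}[Q^{\pi_\theta}(s,f_\theta(s,\epsilon))]$, so that $J(\pi_\theta)=\mathbb{E}_{s_0}[V^{\pi_\theta}(s_0)]$ and the full gradient follows once $\nabla_\theta V^{\pi_\theta}$ is understood.

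The core step is to differentiate this expression. Since $\rho_0$ does not depend on $\theta$, I can exchange gradient and expectation to get $\nabla_\theta V^{\pi_\theta}(s)=\mathbb{E}_\epsilon[\nabla_\theta Q^{\pi_\theta}(s,f_\theta(s,\epsilon))]$. The quantity $Q^{\pi_\theta}(s,a)$ carries $\theta$ in two places: explicitly through the action argument $a=f_\theta(s,\epsilon)$, and implicitly through the policy dependence of $Q^{\pi_\theta}$ itself. The chain rule therefore splits the integrand into
\begin{align}
\nabla_\theta f_\theta(s,\epsilon)\,\nabla_a Q^{\pi_\theta}(s,a)\big|_{a=f_\theta(s,\epsilon)} + \big(\partial_\theta Q^{\pi_\theta}\big)(s,a)\big|_{a=f_\theta(s,\epsilon)},
\end{align}
where $\partial_\theta$ denotes the partial derivative holding $a$ fixed. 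Using the Bellman equation $Q^{\pi_\theta}(s,a)=r(s,a)+\gamma\int p(s'|s,a)V^{\pi_\theta}(s')\,ds'$ and noting that $r$ and $p$ are $\theta$-free, the second term becomes $\gamma\,\mathbb{E}_{a\sim\pi_\theta(\cdot|s)}\int p(s'|s,a)\nabla_\theta V^{\pi_\theta}(s')\,ds'$, i.e.\ $\gamma$ times the expected next-state value gradient. This yields the recursion
\begin{align}
\nabla_\theta V^{\pi_\theta}(s) = \mathbb{E}_\epsilon\big[\nabla_\theta f_\theta(s,\epsilon)\nabla_a Q^{\pi_\theta}(s,a)\big|_{a=f_\theta(s,\epsilon)}\big] + \gamma\,\mathbb{E}_{s'\sim p(\cdot|s,\pi_\theta)}\big[\nabla_\theta V^{\pi_\theta}(s')\big].
\end{align}

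Finally I would unroll this recursion as in the standard proof: iterating it expresses $\nabla_\theta V^{\pi_\theta}(s_0)$ as $\sum_{t\ge0}\gamma^t$ times the expected pathwise term at the time-$t$ state, and collapsing the sum over time into the discounted state-visitation distribution $\rho^{\pi_\theta}$ gives exactly (\ref{eq:pathwisepolicy}), with $\mathbb{E}_{\pi_\theta}$ read as expectation over $\rho^{\pi_\theta}$. The main obstacle is not the algebra but the regularity needed to justify it: I must assume $f_\theta$, $r$, $p$ and $Q^{\pi_\theta}$ are smooth enough (continuous in $\theta$ and $a$ with integrable derivatives) that the interchange of $\nabla_\theta$ with both the $\epsilon$-expectation and the transition integral is legitimate and that the unrolled series converges --- these are precisely the continuity and boundedness conditions under which the deterministic policy gradient theorem holds, now applied to the reparameterized map. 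As a cleaner alternative I could augment the state to $(s,\epsilon)$ with the \emph{deterministic} policy $\tilde\pi_\theta(s,\epsilon)=f_\theta(s,\epsilon)$ and invoke the deterministic pathwise gradient directly, checking that the augmented $Q$ reduces to $Q^{\pi_\theta}$ and that the augmented state distribution factorizes as $\rho^{\pi_\theta}(s)\rho_0(\epsilon)$.
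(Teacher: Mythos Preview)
Your proposal is correct and follows essentially the same route as the paper: reparameterize $V^{\pi_\theta}(s)=\mathbb{E}_\epsilon[Q^{\pi_\theta}(s,f_\theta(s,\epsilon))]$, split the $\theta$-dependence into the pathwise term and a $\gamma\,\mathbb{E}_{s'}[\nabla_\theta V^{\pi_\theta}(s')]$ remainder via the Bellman equation, then unroll into the discounted visitation measure (the paper expands Bellman first and then applies the chain rule, you apply the chain rule first and then Bellman, but the algebra is identical). Your closing state-augmentation idea---treating $(s,\epsilon)$ as the state with deterministic policy $f_\theta$---is a clean alternative not used in the paper and would indeed reduce the result directly to the deterministic policy gradient theorem.
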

To compute the gradient of policy entropy for such general implicit policy, we propose to train an additional classifier $c_\psi: \mathcal{S} \times \mathcal{A} \mapsto \mathbb{R}$ with parameter $\psi$ along with policy $\pi_\theta$. The classifier $c_\psi$ is trained to minimize the following objective given a policy $\pi_\theta$
\begin{align}
\min_\psi \ \mathbb{E}_{a \sim \pi_\theta(\cdot|s)}\big[ -\log \sigma (c_\psi(a,s))  \big] + \mathbb{E}_{a\sim U(\mathcal{A})}\big[-\log (1-\sigma(c_\psi(a,s)))\big],
 \label{eq:critic}
\end{align}
where $\mathcal{U}(\mathcal{A})$ is a uniform distribution over $\mathcal{A}$ and $\sigma(\cdot)$ is the sigmoid function. We have \cref{lem:classifier} in Appendix A.2 to guarantee that the optimal solution $\psi^\ast$ of (\ref{eq:critic}) provides an estimate of policy density, $c_{\psi^\ast}(s,a) = \log \frac{\pi_\theta(a|s)}{|\mathcal{A}|^{-1}}$. As a result, we could evaluate the entropy by simple re-parametrization $\mathbb{H}\big[\pi_\theta(\cdot|s)\big] = \mathbb{E}_{\epsilon \sim \rho_0(\cdot)}\big[-\log \pi(f_\theta(s,\epsilon)|s)\big] \approx \mathbb{E}_{\epsilon \sim \rho_0(\cdot)}\big[- c_{\psi}(f_\theta(s,\epsilon),s)\big]$. Further, we can compute gradients of the policy entropy through the density estimate as shown by the following theorem.
\begin{restatable}[Unbiased Entropy Gradient]{thm}{entgrad}
\label{thm:entgrad}
Let $\psi^\ast$ be the optimal solution from (\ref{eq:critic}), where the policy $\pi_\theta(\cdot|s)$ is given by implicit policy $a = f_\theta(s,\epsilon),\epsilon\sim \rho_0(\cdot)$. The gradient of entropy $\nabla_\theta \mathbb{H}\big[\pi_\theta(\cdot|s)\big]$ can be computed as
\begin{align}
\nabla_\theta \mathbb{H}\big[\pi_\theta(\cdot|s)\big] = - \mathbb{E}_{\epsilon\sim \rho_0(\cdot)}\big[\nabla_\theta c_{\psi^\ast}(f(\theta,\epsilon), s) \big]
 \label{eq:approxentropy}
\end{align}
\end{restatable}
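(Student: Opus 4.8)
The plan is to start from the re-parameterized expression for entropy already recorded in the text, namely $\mathbb{H}\big[\pi_\theta(\cdot|s)\big] = \mathbb{E}_{\epsilon \sim \rho_0(\cdot)}\big[-\log \pi_\theta(f_\theta(s,\epsilon)|s)\big]$, and differentiate it with respect to $\theta$. Since $\rho_0$ carries no $\theta$ dependence, under mild regularity (dominated convergence, so that I may exchange $\nabla_\theta$ with the integral over $\epsilon$) I obtain $\nabla_\theta \mathbb{H}\big[\pi_\theta(\cdot|s)\big] = -\mathbb{E}_{\epsilon \sim \rho_0(\cdot)}\big[\nabla_\theta \log \pi_\theta(f_\theta(s,\epsilon)|s)\big]$. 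The essential subtlety to flag at this point is that $\theta$ enters the integrand in two distinct ways: through the argument $a = f_\theta(s,\epsilon)$ and through the density subscript $\pi_\theta$ itself.

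Next I would make this double dependence explicit via the chain rule, writing the $\theta$-gradient as the sum of an \emph{explicit} term $\big[\nabla_\theta \log \pi_\theta(a|s)\big]_{a = f_\theta(s,\epsilon)}$ (differentiating only the subscript, holding the argument fixed) and an \emph{implicit} term $\nabla_\theta f_\theta(s,\epsilon)\,\big[\nabla_a \log \pi_\theta(a|s)\big]_{a=f_\theta(s,\epsilon)}$ (differentiating only the argument). The crux of the argument is that the explicit term vanishes once the expectation is taken: because $a=f_\theta(s,\epsilon)$ with $\epsilon\sim\rho_0$ is by construction distributed as $\pi_\theta(\cdot|s)$, the expectation of the explicit term equals $\mathbb{E}_{a\sim\pi_\theta(\cdot|s)}\big[\nabla_\theta\log\pi_\theta(a|s)\big]$, the familiar expected score, which integrates to $\nabla_\theta\int\pi_\theta(a|s)\,da = \nabla_\theta 1 = 0$. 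Only the implicit term survives, leaving $\nabla_\theta \mathbb{H}\big[\pi_\theta(\cdot|s)\big] = -\mathbb{E}_{\epsilon\sim\rho_0(\cdot)}\big[\nabla_\theta f_\theta(s,\epsilon)\,\nabla_a \log\pi_\theta(a|s)|_{a=f_\theta(s,\epsilon)}\big]$.

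Finally I would invoke \cref{lem:classifier}, which gives $c_{\psi^\ast}(s,a) = \log\pi_\theta(a|s) + \log|\mathcal{A}|$ at the optimum. Differentiating in $a$ kills the additive constant $\log|\mathcal{A}|$, so $\nabla_a c_{\psi^\ast}(a,s) = \nabla_a \log\pi_\theta(a|s)$, and I may substitute the learned score for the true one without incurring any bias. Recombining, since $\psi^\ast$ is held fixed and $\theta$ enters $c_{\psi^\ast}(f_\theta(s,\epsilon),s)$ only through its first argument, the chain rule reads $\nabla_\theta c_{\psi^\ast}(f_\theta(s,\epsilon),s) = \nabla_\theta f_\theta(s,\epsilon)\,\nabla_a c_{\psi^\ast}(a,s)|_{a=f_\theta(s,\epsilon)}$, which is exactly the surviving implicit term; this yields the claimed identity.

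I expect the main obstacle to be conceptual rather than computational: carefully separating the two routes by which $\theta$ influences $\log\pi_\theta(f_\theta(s,\epsilon)|s)$ and justifying that the explicit-dependence contribution is precisely a mean-zero score term. A secondary technical point worth stating is that the substitution of $c_{\psi^\ast}$ is legitimate because $\psi^\ast$ is treated as a constant of $\theta$ — no gradient is propagated through the classifier — and because the lemma controls $c_{\psi^\ast}$ only up to the $a$-independent constant $\log|\mathcal{A}|$ that the $\nabla_a$ annihilates.
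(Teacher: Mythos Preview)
Your proposal is correct and follows essentially the same route as the paper: re-parameterize the entropy, split the $\theta$-dependence into the score term (which vanishes in expectation) and the pathwise term through $f_\theta$, then invoke \cref{lem:classifier} to replace $\nabla_a\log\pi_\theta$ by $\nabla_a c_{\psi^\ast}$. Your treatment is in fact slightly more explicit than the paper's in noting that the additive constant $\log|\mathcal{A}|$ is annihilated by $\nabla_a$ and that $\psi^\ast$ is held fixed with respect to $\theta$.
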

It is worth noting that to compute $\nabla_\theta \mathbb{H}\big[\pi_\theta(\cdot|s)\big]$, simply plugging in $c_{\psi^\ast}(a,s)$ to replace $\log \pi_\theta(a|s)$ in the entropy definition does not work in general, since the optimal solution $\psi^\ast$ of (\ref{eq:critic}) implicitly depends on $\theta$. However, fortunately in this case the additional term vanishes. The above theorem guarantees that we could apply entropy regularization even when the policy density is not accessible.
\vspace{-.05in}
\paragraph{Off-policy algorithm for NBP.}  We develop an off-policy algorithm for NBP. The agent contains an implicit $f_\theta(s,\epsilon)$ with parameter $\theta$, a critic $Q_\phi(s,a)$ with parameter $\phi$ and a classifier $c_\psi(s,a)$ with parameter $\psi$. At each time step $t$, we sample action $a_t = f_\theta(s_t,\epsilon),\epsilon\sim\rho_0(\cdot)$ and save experience tuple $\{s_t,a_t,r_t,s_{t+1}\}$ to a replay buffer $B$. During training, we sample a mini-batch of tuples from $B$, update critic $Q_\phi(s,a)$ using TD learning, update policy $f_\theta(s,\epsilon)$ using pathwise gradient (\ref{eq:pathwisepolicy}) and update classifier $c_\psi(s,a)$ by gradient descent on (\ref{eq:critic}). We also maintain target networks $f_{\theta^-}(s,\epsilon),Q_{\phi^-}(s,a)$ with parameter $\theta^-,\phi^-$ to stabilize learning \cite{mnih2013,silver2016}. The pseudocode is listed in Appendix D.

\section{Entropy Regularization and Maximum Entropy RL}
Though policy gradient algorithms with entropy regularization are easy to implement in practice, they are harder to analyze due to the lack of a global objective. Now we show that entropy regularization maximizes a lower bound of maximum entropy objective when consecutive policy iterates are close.

At each iteration of entropy regularized policy gradient algorithm, the policy parameter is updated as in (\ref{eq:entropyreg}). Following similar ideas in \cite{kakade2002approximately,schulman2015}, we now interpret such update as maximizing a linearized surrogate objective in the neighborhood of the previous policy iterate $\pi_{\theta_{\text{old}}}$. The surrogate objective is
\begin{align}
J_{\text{surr}}(\pi_\theta) = J(\pi_\theta) + \beta \mathbb{E}_{\pi_{\theta_{\text{old}}}}\big[\sum_{t=0}^\infty \mathbb{H}[\pi_{\theta}(\cdot|s_t)]\gamma^t\big].
\label{eq:surrpolicyobj}
\end{align}
The first-order Taylor expansion of (\ref{eq:surrpolicyobj}) centering at $\theta_{\text{old}}$ gives a linearized surrogate objective $J_{\text{surr}}(\pi_\theta)\approx J_{\text{surr}}(\pi_{\theta_{\text{old}}}) + \nabla_\theta J_{\text{surr}}(\pi_\theta) |_{\theta = \theta_{\text{old}}} (\theta - \theta_{\text{old}})$. Let $\delta \theta = \theta - \theta_{\text{old}}$, the entropy regularized update (\ref{eq:entropyreg}) is equivalent to solving the following optimization problem then update according to $\theta \leftarrow \theta_{\text{old}} + \delta \theta$,
\begin{align}
\min_{\delta \theta}& \big[ \nabla_\theta J_{\text{surr}}(\pi_\theta)|_{\theta = \theta_{\theta_{\text{old}}}}\big]^T \delta \theta \nonumber \\
\ \text{s.t.}&\ ||\delta \theta||_2 \leq C(\alpha,\theta_{\text{old}}), \nonumber
\end{align}
where $C(\alpha,\theta_{\text{old}})$ is a positive constant depending on both the learning rate $\alpha$ and the previous iterate $\theta_{\text{old}}$, and can be recovered from (\ref{eq:entropyreg}). The next theorem shows that by constraining the KL divergence of consecutive policy iterates, the surrogate objective (\ref{eq:surrpolicyobj}) forms a non-trivial lower bound of maximum entropy objective,
\begin{restatable}[Lower Bound]{thm}{lowerbound}
\label{thm:lowerbound}
 If $\mathbb{KL}[\pi_\theta||\pi_{\theta_{\text{old}}}] \leq \alpha$, then  
\begin{align}J_{\text{MaxEnt}}(\pi) \geq J_{\text{surr}}(\pi) - \frac{\beta \gamma\sqrt{\alpha}\epsilon}{(1-\gamma)^2},  \ \ \  \text{where}\  \epsilon= \max_s |\mathbb{H}[\pi(\cdot|s)]|.
\label{eq:lowerbound} 
\end{align}
\end{restatable}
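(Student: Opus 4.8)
\section*{Proof plan for \Cref{thm:lowerbound}}

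The plan is to notice that $J_{\text{MaxEnt}}$ and $J_{\text{surr}}$ differ only through the measure under which one and the same per-state entropy function is integrated. Writing $J(\pi)=\mathbb{E}_\pi[\sum_t r_t\gamma^t]$ and setting $h(s):=\mathbb{H}[\pi(\cdot|s)]$, both objectives share the identical reward term $J(\pi)$ and the identical function $h$; the sole difference is that $J_{\text{MaxEnt}}$ weights $\sum_{t}\gamma^t h(s_t)$ by the trajectory law of $\pi$ itself, whereas $J_{\text{surr}}$ weights it by the trajectory law of $\pi_{\theta_{\text{old}}}$ (abbreviated $\pi_{\text{old}}$ below). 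Hence $J_{\text{MaxEnt}}(\pi)-J_{\text{surr}}(\pi)=\beta\big(\mathbb{E}_{\pi}[\sum_t\gamma^t h(s_t)]-\mathbb{E}_{\pi_{\text{old}}}[\sum_t\gamma^t h(s_t)]\big)$, and it suffices to show this quantity is at least $-\beta\gamma\sqrt{\alpha}\,\epsilon/(1-\gamma)^2$.

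Next I would pass to the unnormalized discounted state-occupancy measure $\rho^\pi(s)=\sum_{t=0}^\infty\gamma^t\,\mathbb{P}(s_t=s\mid\pi)$, so that $\mathbb{E}_\pi[\sum_t\gamma^t h(s_t)]=\langle\rho^\pi,h\rangle$. The difference then equals $\beta\langle\rho^{\pi}-\rho^{\pi_{\text{old}}},\,h\rangle$, and Hölder's inequality gives $|\langle\rho^{\pi}-\rho^{\pi_{\text{old}}},h\rangle|\le\|h\|_\infty\,\|\rho^{\pi}-\rho^{\pi_{\text{old}}}\|_1=\epsilon\,\|\rho^{\pi}-\rho^{\pi_{\text{old}}}\|_1$, since $\epsilon=\max_s|h(s)|$. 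The entire theorem thus reduces to the single estimate $\|\rho^{\pi}-\rho^{\pi_{\text{old}}}\|_1\le \gamma\sqrt{\alpha}/(1-\gamma)^2$ (up to an absolute constant).

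To control the occupancy gap I would compare the per-timestep marginals $d_t^\pi=(P_\pi^\top)^t d_0$, where $P_\pi(s'|s)=\sum_a\pi(a|s)p(s'|s,a)$ is the policy-induced transition operator and $d_0$ the initial distribution. Two facts drive the bound: each $P_\pi^\top$ does not increase the $\ell_1$ norm of any signed measure, and for any distribution $\mu$ one has $\|(P_\pi^\top-P_{\pi_{\text{old}}}^\top)\mu\|_1\le\max_s\|\pi(\cdot|s)-\pi_{\text{old}}(\cdot|s)\|_1$. Expanding the difference as the telescoping sum $(P_\pi^\top)^t-(P_{\pi_{\text{old}}}^\top)^t=\sum_{k=0}^{t-1}(P_\pi^\top)^{t-1-k}(P_\pi^\top-P_{\pi_{\text{old}}}^\top)(P_{\pi_{\text{old}}}^\top)^k$ and applying the two facts termwise yields $\|d_t^\pi-d_t^{\pi_{\text{old}}}\|_1\le t\,\max_s\|\pi(\cdot|s)-\pi_{\text{old}}(\cdot|s)\|_1$. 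Pinsker's inequality converts the per-state policy gap into the hypothesis, $\|\pi(\cdot|s)-\pi_{\text{old}}(\cdot|s)\|_1\le\sqrt{2\,\mathbb{KL}[\pi(\cdot|s)\|\pi_{\text{old}}(\cdot|s)]}\le\sqrt{2\alpha}$, and summing the discounted series with $\sum_{t\ge0}t\gamma^t=\gamma/(1-\gamma)^2$ gives $\|\rho^{\pi}-\rho^{\pi_{\text{old}}}\|_1\le\sqrt{2\alpha}\,\gamma/(1-\gamma)^2$. Combining with the Hölder step and keeping only the one-sided inequality (a lower bound needs no more) produces the claim; the extra $\sqrt 2$ is harmlessly absorbed, or disappears if the constraint is read as a bound on $\tfrac12\mathbb{KL}$.

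The main obstacle is the occupancy-measure estimate of the third paragraph: recovering the sharp factor $\gamma/(1-\gamma)^2$ requires the telescoping decomposition together with the $\ell_1$-nonexpansiveness of the transition operators, rather than a crude term-by-term bound. A secondary point deserving care is the precise reading of the hypothesis $\mathbb{KL}[\pi_\theta\|\pi_{\theta_{\text{old}}}]\le\alpha$: the argument uses it as a uniform, worst-case-over-$s$ per-state bound, so that Pinsker can be applied inside the $\max_s$, and this interpretation should be stated explicitly. Everything else—the reduction to occupancy measures, the Hölder step, and the summation of $\sum_t t\gamma^t$—is routine.
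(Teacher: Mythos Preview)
Your argument is correct but follows a genuinely different route from the paper. The paper adopts the TRPO-style \emph{coupling} argument of \cite{schulman2015}: it defines two policies to be $\alpha$-coupled when they can be realized on a common probability space so that their actions disagree with probability at most $\alpha$ at every state, invokes the fact that $\mathbb{KL}[\pi\|\pi_{\text{old}}]\le\alpha$ implies $\sqrt{\alpha}$-coupling, and then bounds $|\mathbb{E}_{s_t\sim\pi}[h(s_t)]-\mathbb{E}_{s_t\sim\pi_{\text{old}}}[h(s_t)]|$ by $2(1-(1-\sqrt{\alpha})^t)\epsilon$ via a case split on whether the two coupled trajectories have diverged before time $t$; summing $\gamma^t(1-(1-\sqrt{\alpha})^t)$ in closed form and upper-bounding $1-\gamma(1-\sqrt{\alpha})$ by $1-\gamma$ yields the stated constant. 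You instead work directly with discounted occupancy measures, use H\"older to isolate $\|\rho^\pi-\rho^{\pi_{\text{old}}}\|_1$, expand the marginal difference via a telescoping operator identity, and apply Pinsker per state. Your route is more modular and avoids the coupling construction entirely; it also makes transparent that the result is really an $\ell_1$ occupancy-perturbation bound, at the cost of a $\sqrt{2}$ (which, as you note, matches the paper's stated constant only after the paper silently drops a factor of $2$ from its own lemma, so neither derivation hits the printed constant on the nose). Your caveat about reading the KL hypothesis as a uniform per-state bound is well taken and is exactly the interpretation the paper's coupling step requires as well.
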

By optimizing $J_{\text{surr}}(\pi_\theta)$ at each iteration, entropy regularized policy gradient algorithms maximize a lower bound of $J_{\text{MaxEnt}}(\pi_\theta)$. This implies that though entropy regularization is a greedier procedure than optimizing maximum entropy objective, it accounts for certain effects that the maximum entropy objective is designed to capture. Nevertheless, the optimal solutions of both optimization procedures are different. Previous works \cite{donoghue2017,tuomas2017} have shown that the optimal solutions of both procedures are energy based policies, with energy functions being fixed points of Boltzmann operator and Mellowmax operator respectively \cite{asadi2017}. In Appendix B, we show that Boltzmann operator interpolates between Bellman operator and Mellowmax operator, which asserts that entropy regularization is greedier than optimizing $J_{\text{MaxEnt}}(\pi_\theta)$, yet it still maintains uncertainties in the policy updates. 

Though maximum entropy objective accounts for long term effects of policy entropy updates and is more conducive to analysis \cite{asadi2017}, it is hard to implement a simple yet scalable procedure to optimize the objective \cite{tuomas2017,tuomas2018,asadi2017}. Entropy regularization, on the other hand, is simple to implement in both on-policy and off-policy setting. In experiments, we will show that entropy regularized policy gradients combined with expressive policies achieve competitive performance in multiple aspects.

\section{Experiments}
Our experiments aim to answer the following questions: (1) Will expressive policy be hard to train, does implicit policy provide competitive performance on benchmark tasks? (2) Are implicit policies robust to noises on locomotion tasks? (3) Does implicit policy $+$ entropy regularization entail multi-modal policies as displayed under maximum entropy framework \cite{tuomas2017}? 

To answer (1), we evaluate both NFP and NBP agent on benchmark continuous control tasks in MuJoCo \cite{todorov2012} and compare with baselines. To answer (2), we compare NFP with unimodal Gaussian policy on locomotion tasks with additive observational noises. To answer (3), we illustrate the multi-modal capacity of both policy representations on specially designed tasks illustrated below, and compare with baselines. In all experiments, for NFP, we implement with standard PPO for on-policy update to approximately enforce the KL constraint (\ref{eq:lowerbound}) as in \cite{schulman2017}; for NBP, we implement the off-policy algorithm developed in Section 3. In Appendix C and F, we detail hyper-parameter settings in the experiments and provide a small ablation study.

\subsection{Locomotion Tasks}
\paragraph{Benchmark tasks.} One potential disadvantage of expressive policies compared to simple policies (like unimodal Gaussian) is that they pose a more serious statistical challenge due to a larger number of parameters. To see if implicit policy suffers from such problems, we evaluate NFP and NBP on MuJoCo benchmark tasks. For each task, we train for a prescribed number of time steps, then report the results averaged over 5 random seeds. We compare the results with baseline algorithms, such as DDPG \cite{silver2016}, SQL \cite{tuomas2017}, TRPO \cite{schulman2015} and PPO \cite{schulman2017}, where baseline TPRO and PPO use unimodal Gaussian policies. As can be seen from Table 1, both NFP and NBP achieve competitive performances on benchmark tasks: they outperform DDPG, SQL and TRPO on most tasks. However, baseline PPO tends to come on top on most tasks. Interestingly on HalfCheetah, baseline PPO gets stuck on a locally optimal gait, which NFP improves upon by a large margin. 

\begin{minipage}{\linewidth}
\centering
%\captionof{table}{MuJoCo Benchmark Tasks} \label{tab:title} 
\begin{tabular}{ C{0.8in} C{.55in} *6{C{.45in}}}\toprule[1.5pt]
\bf Tasks & \bf Timesteps  & \bf DDPG & \bf SQL & \bf TRPO & \bf PPO & \bf NFP & \bf NBP \\\midrule
Hopper       &  $2.00 \cdot 10^6$  & $\approx 1100$ & $\approx 1500$ & $\approx 1250$ &$\mathbf{\approx 2130}$& $\approx 1640$ & $\mathbf{\approx 1880}$ \\
 HalfCheetah & $1.00 \cdot 10^7$ & $\approx 6500$ & $\mathbf{\approx 8000}$ & $\approx 1800$ &$\approx 1590$& $\approx 4000$ & $\mathbf{\approx 6560}$ \\ 
  Walker2d & $5.00 \cdot 10^6$ & $\approx 1600$ & $\approx 2100$ & $\approx 800$ &$\mathbf{\approx 3800}$& $\mathbf{\approx 3000}$ & $\approx 2450$\\ 
 Ant & $1.00 \cdot 10^7$ & $\approx 200$ & $\approx 2000$ & $\approx 0$ &$\mathbf{\approx 4440}$& $\mathbf{\approx2500}$ & $\approx 2070$\\ 
%  Humanoid(rllab) & $8.00 \cdot 10^6$ & $\approx 0$ & $\approx 500$ & $\approx 100$ &NA& $\mathbf{\approx 570}$ & $\mathbf{\approx 600}$ \\
\bottomrule[1.25pt]
\end {tabular}\par
\bigskip
\small{Table 1: A comparison of implicit policy optimization with baseline algorithms on MuJoCo benchmark tasks. For each task, we show the average rewards achieved after training the agent for a fixed number of time steps. The results for NFP and NBP are averaged over 5 random seeds. The results for DDPG, SQL and TRPO are approximated based on the figures in \cite{tuomas2018}, PPO is from OpenAI baseline implementation \cite{baselines}. We highlight the top two algorithms for each task in bold font. Both TRPO and PPO use unimodal Gaussian policies.}
\end{minipage}
%\vspace{-.1in}
\paragraph{Robustness to Noisy Observations.} We add independent Gaussian noise $\mathcal{N}(0,0.1^2)$ to each component of the observations to make the original tasks partially observable. Since PPO with unimodal Gaussian achieves leading performance on noise-free locomotion tasks across on-policy baselines (A2C \cite{mnih2016}, TRPO \cite{schulman2015}) as shown in \cite{schulman2017} and Appendix E.1, we compare NFP only with PPO with unimodal Gaussian on such noisy locomotion tasks. In Figure \ref{figure:noisyobservations}, we show the learning curves of both agents, where on many tasks NFP learns significantly faster than unimodal Gaussian. Why complex policies may add to robustness? We propose that since these control tasks are known to be solved by multiple separate modes of policy \cite{mania2018simple}, observational noises potentially blur these modes and make it harder for a unimodal Gaussian policy to learn any single mode (e.g. unimodal Gaussian puts probability mass between two neighboring modes \cite{levine2018}). On the contrary, NFP can still navigate a more complex reward landscape thanks to a potentially multi-modal policy distribution and learn effectively. We leave a more detailed study of robustness, multi-modality and complex reward landscape as interesting future work.
\vspace{-.1in}
\begin{figure}[h]
\centering
\subfigure[Hopper]{\includegraphics[width=.23\linewidth]{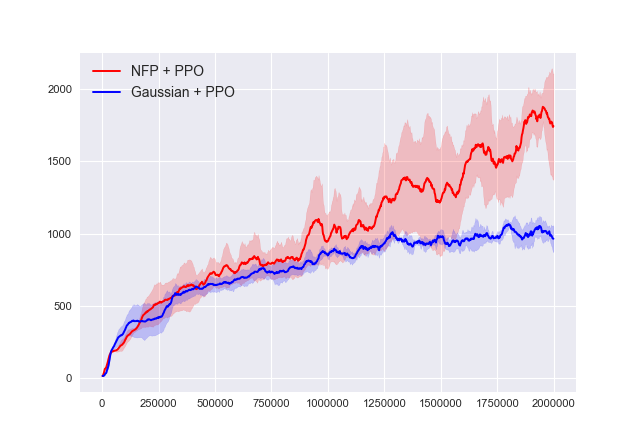}}
\subfigure[Walker]{\includegraphics[width=.23\linewidth]{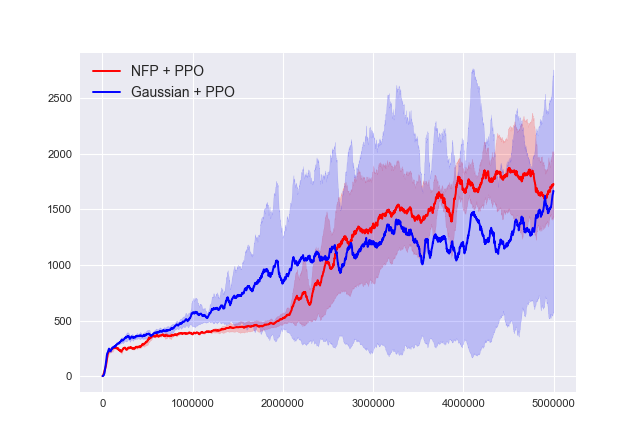}}
\subfigure[Reacher]{\includegraphics[width=.23\linewidth]{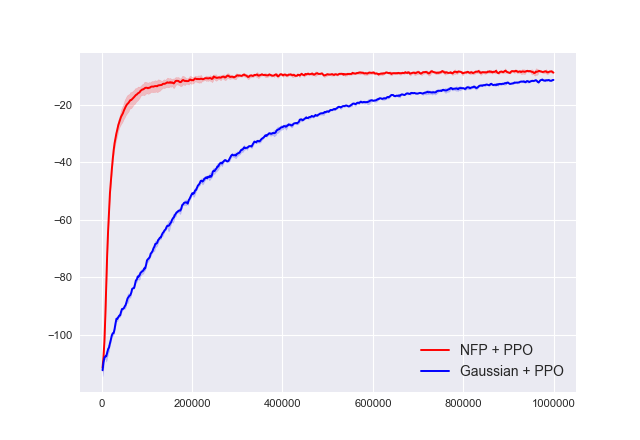}}
\subfigure[Swimmer]{\includegraphics[width=.23\linewidth]{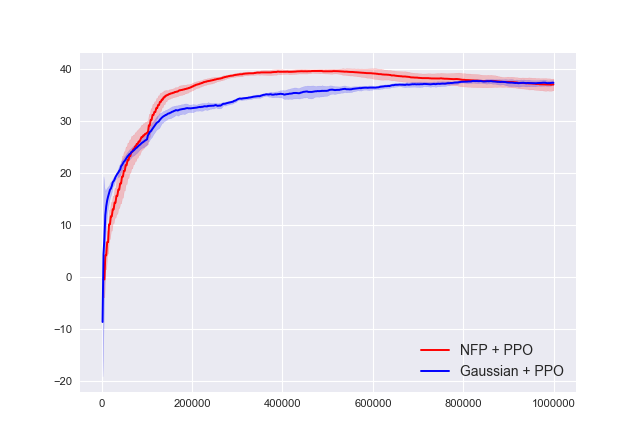}}
\subfigure[HalfCheetah]{\includegraphics[width=.23\linewidth]{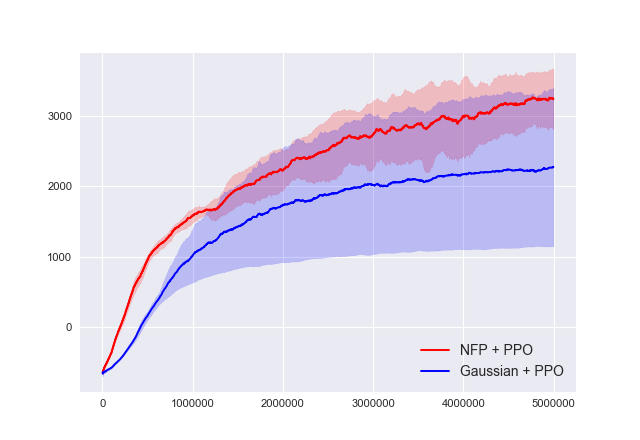}}
\subfigure[Ant]{\includegraphics[width=.23\linewidth]{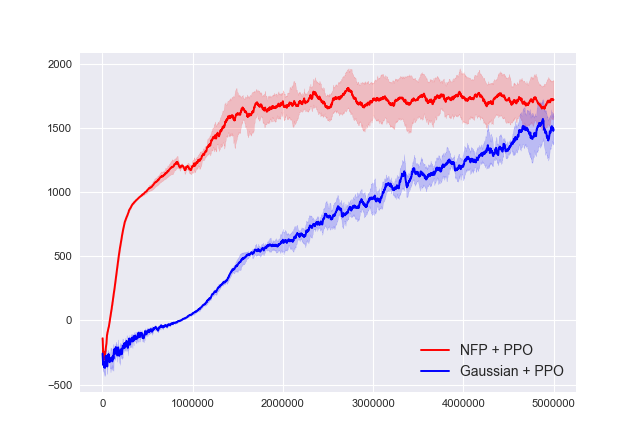}}
\subfigure[MountainCar]{\includegraphics[width=.23\linewidth]{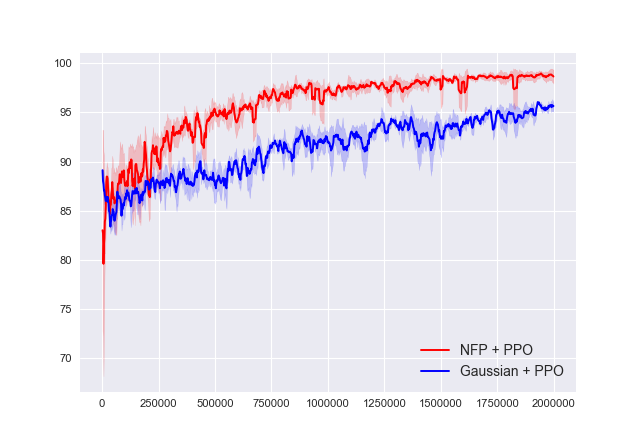}}
\subfigure[Pendulum]{\includegraphics[width=.23\linewidth]{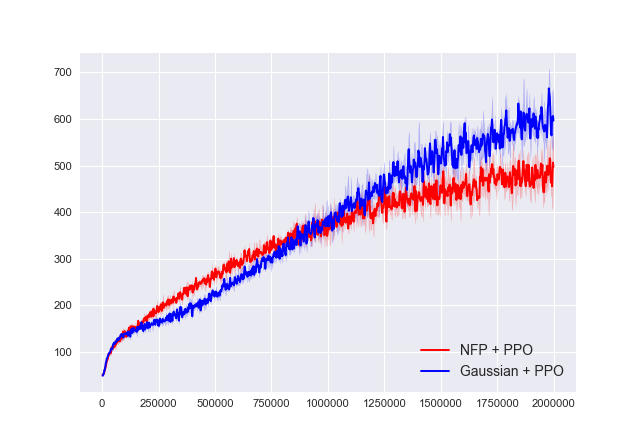}}
\caption{\small{Noisy Observations: learning curves on noisy locomotion tasks. For each task, the observation is added a Gaussian noise $\mathcal{N}(0,0.1^2)$ component-wise. Each curve is averaged over 4 random seeds. Red is NFP and blue is unimodal Gaussian, both implemented with PPO. NFP beats Gaussian on most tasks.}}
\label{figure:noisyobservations}
\end{figure}

%In Figure \ref{figure:standardize} we present how the algorithms' performance varies as the additive noise scale changes. Consider adding a Gaussian noise $\mathcal{N}(0,\sigma^2)$ for $0.0 \leq \sigma \leq 0.1$, as $\sigma$ increases the observations become less informative. We observe that NFP $+$ PPO is more robust than unimodal Gaussian $+$ PPO on many tasks. More interestingly, occasionally the performance improves as more noise is added (see Hopper). One potential reason is that adding noise has the side effect of injecting noise into the learning process, making it possible for the agent to explore more efficiently. 
%\begin{figure}[h]
%\centering
%\subfigure[Hopper]{\includegraphics[width=.23\linewidth]{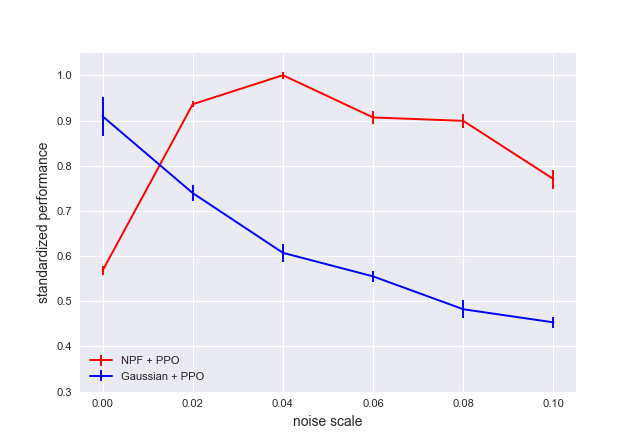}}
%\subfigure[HalfCheetah]{\includegraphics[width=.23\linewidth]{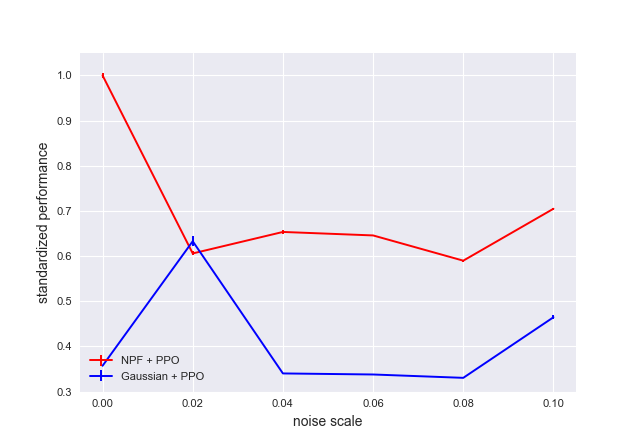}}
%\subfigure[Reacher]{\includegraphics[width=.23\linewidth]{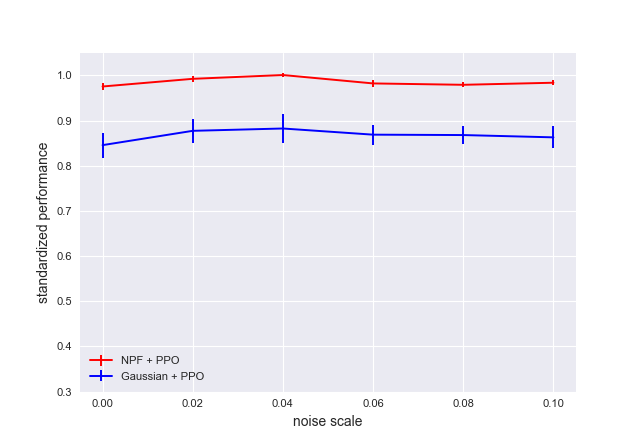}}
%\subfigure[Swimmer]{\includegraphics[width=.23\linewidth]{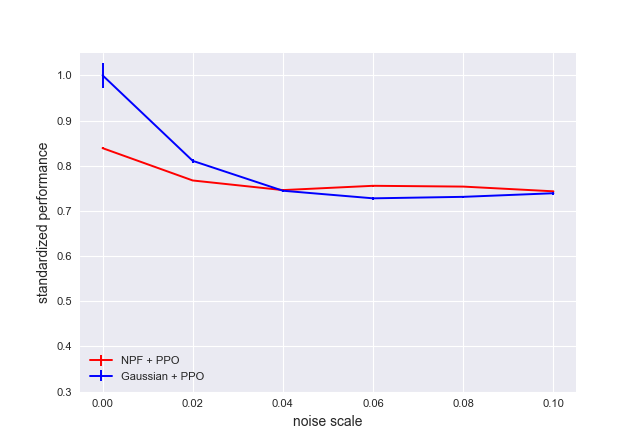}}
%\subfigure[MountainCar]{\includegraphics[width=.23\linewidth]{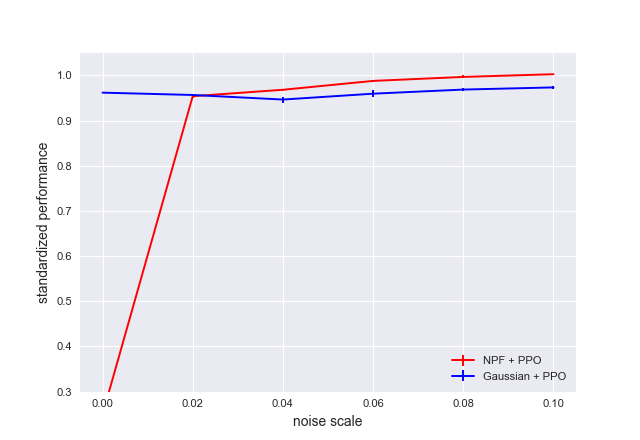}}
%\caption{\small{Relative performance as noise scale varies: all performances are standardized so that the highest achieved performance is $1$ and random performance is $0$. For each task, the observation is added a Gaussian noise $\mathcal{N}(0,\sigma^2)$ where $\sigma$ is the noise scale. Each curve is averaged over 3 random seeds. Red is NFP and blue is unimodal Gaussian, both with PPO.}}
%\label{figure:standardize}
%\end{figure}

\subsection{Multi-modal policy}
\paragraph{Gaussian Bandits.} Though factorized unimodal policies suffice for most benchmark tasks, below we motivate the importance of a flexible policy by a simple example: Gaussian bandits. Consider a two dimensional bandit $\mathcal{A} = [-1,1]^2$. The reward of action $a$ is $-a^T \Sigma^- a$ for a positive definite matrix $\Sigma$. The optimal policy for maximum entropy objective is $\pi^\ast(a) \propto \exp(-\frac{a^T \Sigma^- a}{\beta})$, i.e. a Gaussian policy with covariance matrix $\Sigma$. We compare NFP with PPO with factorized Gaussian. As illustrated in Figure \ref{figure:multigoal}(a), NFP can approximate the optimal Gaussian policy pretty closely while the factorized Gaussian cannot capture the high correlation between the two action components.
\vspace{-.1in}
\paragraph{Navigating 2D Multi-goal.} We motivate the strength of implicit policy to represent multi-modal policy by Multi-goal environment \cite{tuomas2017}. The agent has 2D coordinates as states $\mathcal{S} \subset \mathbb{R}^2$ and 2D forces as actions $\mathcal{A} \subset \mathbb{R}^2$. A ball is randomly initialized near the origin and the goal is to push the ball to reach one of the four goal positions plotted as red dots in Figure \ref{figure:multigoal}(b). While a unimodal policy can only deterministically commit the agent to one of the four goals, a multi-modal policy obtained by NBP can stochastically commit the agent to multiple goals. On the right of Figure \ref{figure:multigoal}(b) we also show sampled actions and contours of Q value functions at various states: NBP learns a very flexible policy with different number of modes in different states.
\vspace{-.1in}
\paragraph{Learning a Bimodal Reacher.} For a more realistic example, consider learning a bimodal policy for reaching one of two targets (Figure \ref{figure:pretrain}(a)). The agent has the physical coordinates of the reaching arms as states  $\mathcal{S} \subset \mathbb{R}^9$ and applies torques to the joints as actions $\mathcal{A}\subset\mathbb{R}^2$. The objective is to move the reacher head to be close to one of the targets. As illustrated by trajectories in Figure \ref{figure:multigoal}(c), while a unimodal Gaussian policy can only deterministically reach one target (red curves), a NFP agent can capture both modes by stochastically reaching one of the two targets (blue curves).
\vspace{-.1in}
\begin{figure}[h]
\centering
\subfigure[Gaussian Bandit]{\includegraphics[width=.30\linewidth]{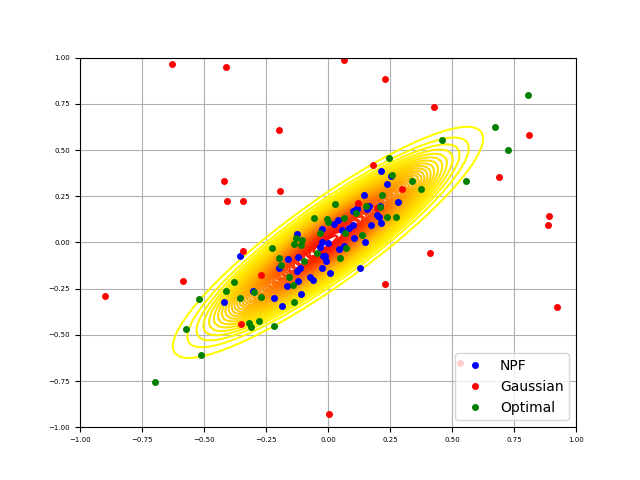}}
\subfigure[2D Multi-goal]{\includegraphics[width=.3375\linewidth]{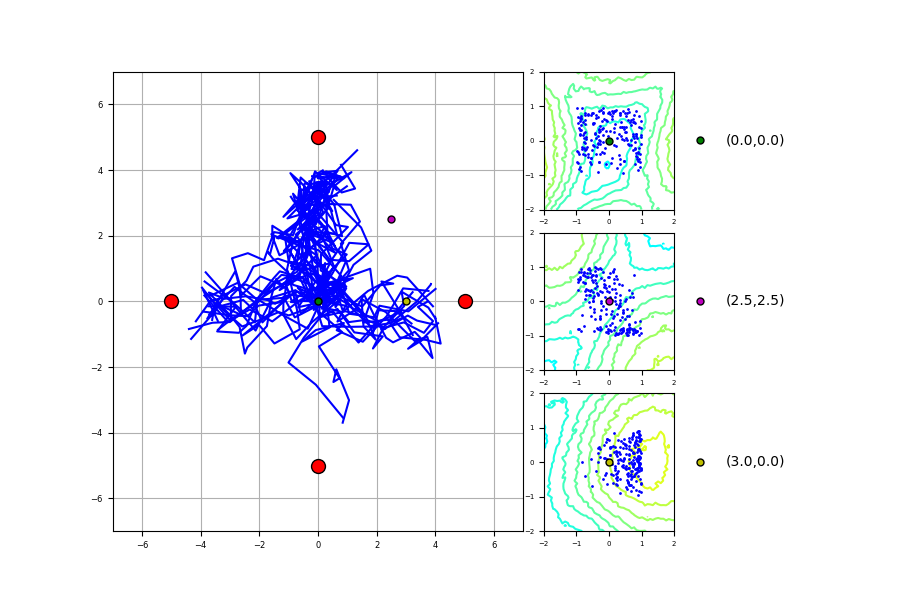}}
\subfigure[Bimodal Reacher]{\includegraphics[width=.29\linewidth]{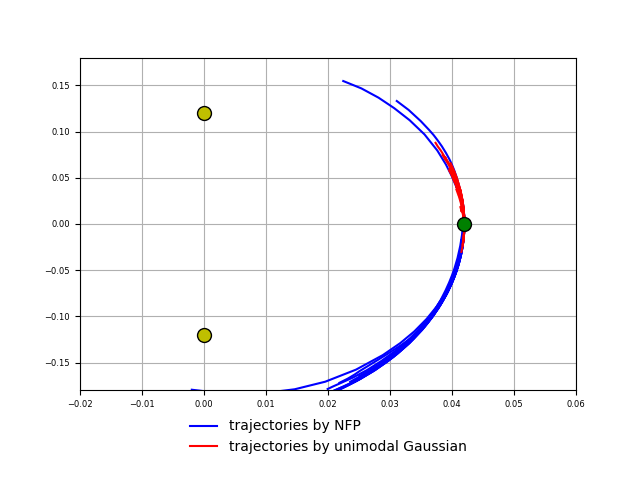}}
\caption{\small{(a): Illustration of Gaussian bandits. The $x$ and $y$ axes are actions. Green dots are actions from the optimal policy, a Gaussian distribution with covariance structure illustrated by the contours. Red dots and blue dots are actions sampled from a learned factorized Gaussian and NFP. NFP captures the covariance of the optimal policy while factorized Gaussian cannot. (b): Illustration of 2D multi-goal environment. Left: trajectories generated by trained NBP agent (solid blue curves). The $x$ and $y$ axes are coordinates of the agent (state). The agent is initialized randomly near the origin. The goals are red dots, and instant rewards are proportional to the agent's minimum distance to one of the four goals. Right: predicted Q value contours by the critic (light blue: low value, light green: high value and actions sampled from the policy (blue dots) at three selected states. The NFP policy has different number of modes at different states. (c): Trajectories of the reacher head by NFP (blue curves) and unimodal Gaussian policies (red curves) for the bimodal reacher. Yellow dots are locations of the two targets, and the green dot is the starting location of the reacher.}}
\label{figure:multigoal}
\end{figure}
\vspace{-.1in}
\paragraph{Fine-tuning for downstream tasks.} A recent paradigm for RL is to pre-train an agent to perform a conceptually high-level task, which may accelerate fine-tuning the agent to perform more specific tasks \cite{tuomas2017}. We consider pre-training a quadrupedal robot (Figure \ref{figure:pretrain}(b)) to run fast, then fine-tune the robot to run fast in a particular direction \cite{tuomas2017} as illustrated in Figure \ref{figure:pretrain}(c), where we set walls to limit the directions in which to run. Wide and Narrow Hallways tasks differ by the distance of the opposing walls. If an algorithm does not inject enough diversity during pre-training, it will commit the agent to prematurely run in a particular direction, which is bad for fine-tuning. We compare the pre-training capacity of DDPG \cite{timothy2016}, SQL \cite{tuomas2017} and NBP. As shown in Figure \ref{figure:pretrain}(d), after pre-training, NBP agent manages to run in multiple directions, while DDPG agent runs in a single direction due to a deterministic policy (Appendix E.2). In Table 2, we compare the cumulative rewards of agents after fine-tuning on downstream tasks with different pre-training as initializations. In both tasks, we find NBP to outperform DDPG, SQL and random initialization (no pre-training) by statistically significant margins, potentially because NBP agent learns a high-level running gait that is more conducive to fine-tuning. Interestingly, in Narrow Hallway, randomly initialized agent performs better than DDPG pre-training, which is probably because running fast in Narrow Hallway requires running in a very narrow direction, and DDPG pre-trained agent needs to first unlearn the overtly specialized running gait acquired from pre-training. In Wide Hallway, randomly initialized agent easily gets stuck in a locally optimal gait (running between two opposing walls) while pre-training in general helps avoid such problem.
\vspace{-.1in}
\begin{figure}[h]
\centering
\subfigure[Reacher]{\includegraphics[width=.22\linewidth]{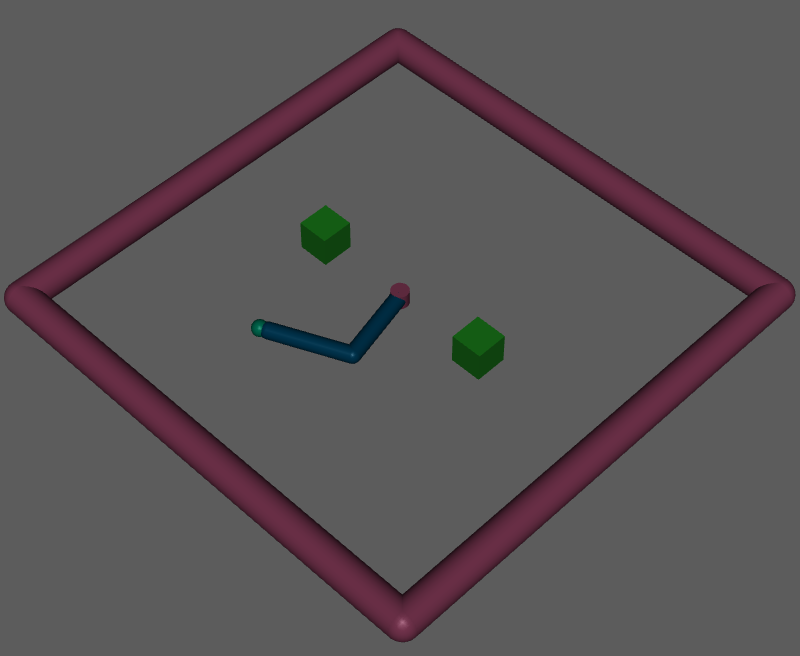}}
\subfigure[Ant]{\includegraphics[width=.24\linewidth]{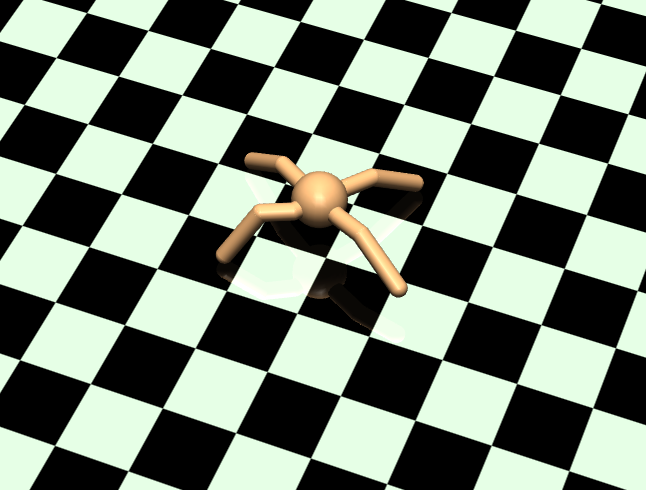}}
\subfigure[Wide Hallway]{\includegraphics[width=.24\linewidth]{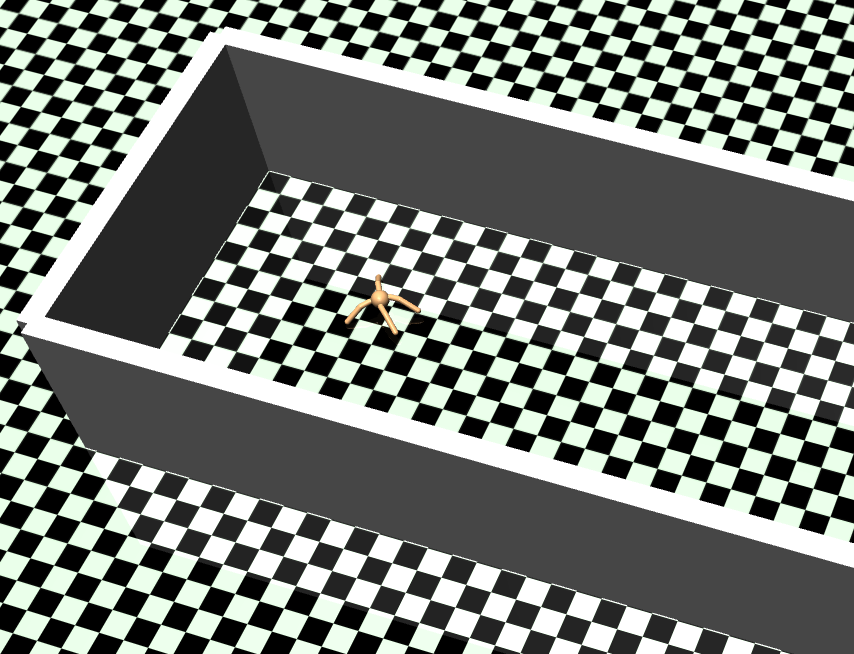}}
\subfigure[Ant Running]{\includegraphics[width=.26\linewidth]{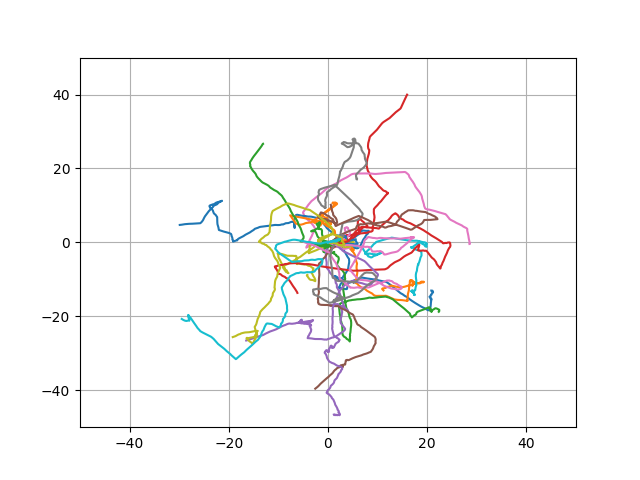}}
\caption{\small{Illustration of locomotion tasks: (a) Bimodal Reacher. Train a reacher to reach one of two targets. Green boxes are targets. (b) Ant-Running. Train a quadrupedal robot to run fast. The instant reward is the robot's center of mass velocity; (c) Ant-Hallway. Train a quadrupedal robot to run fast under the physical constraints of walls, the instant reward is the same as in (b). Narrow and Wide Hallway tasks differ by the distance between the opposing walls; (d) Trajectories by NBP agent in Ant-Running. The agent learns to run in multiple directions.}}
\label{figure:pretrain}
\end{figure}
\begin{minipage}{\linewidth}
\centering
%\captionof{table}{MuJoCo Benchmark Tasks} \label{tab:title} 
\begin{tabular}{ C{1.in} *4{C{.8in}}}\toprule[1.5pt]
\bf Tasks  & \bf Random init & \bf DDPG init & \bf SQL init &\bf NBP init \\\midrule
Wide Hallway    &  $522 \pm 111$  & $3677 \pm 472$ & $3624 \pm 312$ & $\mathbf{4306 \pm 571}$ \\
Narrow Hallway   &  $3023 \pm 280$  & $2866 \pm 248$ & $3026 \pm 352$ & $\mathbf{3752 \pm 408}$ \\
\bottomrule[1.25pt]
\end {tabular}\par
\bigskip
\centering
\small{Table 2: A comparison of downstream fine-tuning under different initializations. For each task, we show the  cumulative rewards after pre-training for $2\cdot10^6$ steps and fine-tuning for $10^6$ steps. The rewards are shown in the form $(\text{mean} \pm \text{std})$, all results are averaged over 5 seeds. Random init means the agent is trained from scratch.}
\end{minipage}
\paragraph{Combining multiple modes by Imitation Learning.} We propose another paradigm that can be of practical interest. In general, learning a multi-modal policy from scratch is hard for complex tasks since it requires good exploration and an algorithm to learn multi-modal distributions \cite{tuomas2017}, which is itself a hard inference problem \cite{goodfellow2015}. A big advantage of policy based algorithm over value based algorithm \cite{tuomas2017} is that the policy can be easily combined with imitation learning. We could decompose a complex task into several simpler tasks, each representing a simple mode of behavior easily learned by a RL agent, then combine them into a single agent using imitation learning or inverse RL \cite{abbeel2010,finn2016,ross2011}.

We illustrate with a stochastic Swimmer example (see Appendix E.3). Consider training a Swimmer to move fast either forward or backward. The aggregate behavior has two modes and it is easy to solve each single mode. We train two separate Swimmers to move forward/backward and generate expert trajectories using the trained agents. We then train a NBP / NFP agent using GAN \cite{goodfellow2015} / maximum likelihood estimation to combine both modes. Training with the same algorithms, a unimodal policy either commits to only one mode or learns a policy that puts large probability mass between the two modes \cite{levine2018,goodfellow2015}, which greatly deviates from the expert policy. On the contrary, expressive policies can more flexibly incorporate multiple modes into a single agent.

\section{Conclusion}
We have proposed \emph{Implicit Policy}, a rich class of policy that can represent complex action distributions. We have derived efficient algorithms to compute entropy regularized policy gradients for generic implicit policies. Importantly, we have also showed that entropy regularization maximizes a lower bound of maximum entropy objective, which implies that in practice entropy regularization $+$ rich policy class can lead to desired properties of maximum entropy RL. We have empirically showed that implicit policy achieves competitive performance on benchmark tasks, is more robust to observational noise, and can flexibly represent multi-modal distributions. 

%\section{Acknowledgement}
%The authors would like to thank Jalaj Bhandari for discussions on Normalizing Flows, and Sergey Levine for helpful comments on implementations of the simulation environments. The authors also thank Amazon Web Services (AWS) for computation support.

\newpage
\paragraph{Acknowledgements.} This research was supported by an Amazon Research Award (2017) and AWS cloud credits. The authors would like to thank Jalaj Bhandari for helpful discussions, and Sergey Levine for helpful comments on early stage experiments of the paper. 

\bibliographystyle{apa}
\bibliography{your_bib_file.bib}

\newpage
\appendix
\section{Proof of Theorems}
\subsection{Stochastic Pathwise Gradient}
\pathgrad*
\begin{proof}
We follow closely the derivation of deterministic policy gradient \cite{silver2014}. We assume that all conditions are satisfied to exchange expectations and gradients when necessary. Let $\pi = \pi_\theta$ denote the implicit policy $a_t = f_\theta(s_t,\epsilon), \epsilon \sim \rho_0(\cdot)$. Let $V^\pi,Q^\pi$ be the value function and action value function under such stochastic policy. We introduce $p(s\rightarrow s^\prime,k,\pi)$ as the probability of transitioning from $s$ to $s^\prime$ in $k$ steps under policy $\pi$. Overloading the notation a bit, $p(s\rightarrow s^\prime,1,a)$ is the probability of $s\rightarrow s^\prime$ in one step by taking action $a$ (i.e.,  $p(s\rightarrow s^\prime,1,a)=p(s^\prime|s,a)$). We have 
\begin{align*}
&\nabla_\theta V^\pi(s) \\
&= \nabla_\theta \mathbb{E}_{a \sim \pi(\cdot | s)}\big[ Q^\pi(s,a)\big] = \nabla_\theta \mathbb{E}_{\epsilon \sim \rho_0(\cdot)}\big[ Q^\pi(s,f_\theta(s,\epsilon))\big]\\
&= \nabla_\theta \mathbb{E}_{\epsilon \sim \rho_0(\cdot)} \big[r(s,f_\theta(s,\epsilon)) + \int_\mathcal{S} \gamma p(s^\prime |s, f_\theta(s,\epsilon)) V^\pi(s^\prime) ds^\prime \big]\\
&= \mathbb{E}_{\epsilon \sim \rho_0(\cdot)} \big[ \nabla_\theta r(s,f_\theta(s,\epsilon)) + \nabla_\theta \int_\mathcal{S} \gamma p(s^\prime |s, f_\theta(s,\epsilon)) V^\pi(s^\prime) ds^\prime \big]\\
&=  \mathbb{E}_{\epsilon \sim \rho_0(\cdot)} \big[ \nabla_\theta r(s,f_\theta(s,\epsilon)) + \int_\mathcal{S} \gamma V^\pi(s^\prime) \nabla_\theta p(s^\prime |s, f_\theta(s,\epsilon)) ds^\prime  + \int_\mathcal{S} \gamma p(s^\prime |s,f_\theta(s,\epsilon)) \nabla_\theta V^\pi(s^\prime) ds^\prime \big]\\
&= \mathbb{E}_{\epsilon \sim \rho_0(\cdot)} \big[\nabla_\theta f_\theta(s,\epsilon) \nabla_a [r(s,a) + \gamma \int_\mathcal{S} \gamma p(s^\prime| s,a)V^\pi(s^\prime)ds^\prime)]|_{a = f_\theta(s,\epsilon)} \\
&\ \ \ \ \ + \int_\mathcal{S} \gamma p(s^\prime |s,f_\theta(s,\epsilon)) \nabla_\theta V^\pi(s^\prime) ds^\prime  \big]\\
&= \mathbb{E}_{\epsilon \sim \rho_0(\cdot)} \big[ \nabla_\theta f_\theta(s,\epsilon) \nabla_a Q^\pi(s,a)|_{a=f_\theta(s,\epsilon)}\big] + \mathbb{E}_{\epsilon \sim \rho_0(\cdot)} \big[\int_{\mathcal{S}} \gamma p(s\rightarrow s^\prime, 1, f_\theta(s,\epsilon))\nabla_\theta V^\pi(s^\prime)ds^\prime \big].
\end{align*}
In the above derivation, we have used the Fubini theorem to interchange integral (expectation) and gradients. We can iterate the above derivation and have the following
\begin{align*}
&\nabla_\theta V^\pi(s)\\
&= \nabla_\theta \mathbb{E}_{a \sim \pi(\cdot | s)}\big[ Q^\pi(s,a)\big]\\
&= \mathbb{E}_{\epsilon \sim \rho_0(\cdot)} \big[ \nabla_\theta f_\theta(s,\epsilon) \nabla_a Q^\pi(s,a)|_{a=f_\theta(s,\epsilon)}\big] + \mathbb{E}_{\epsilon \sim \rho_0(\cdot)} \big[ \gamma p(s\rightarrow s^\prime, 1, f_\theta(s,\epsilon))\nabla_\theta V^\pi(s^\prime)ds^\prime \big]\\
&= \mathbb{E}_{\epsilon \sim \rho_0(\cdot)} \big[ \nabla_\theta f_\theta(s,\epsilon) \nabla_a Q^\pi(s,a)|_{a=f_\theta(s,\epsilon)}\big] +\\
&\ \ \ \ \   \mathbb{E}_{\epsilon \sim \rho_0(\cdot)} \big[ \int_{\mathcal{S}} \gamma p(s\rightarrow s^\prime, 1, f_\theta(s,\epsilon))\mathbb{E}_{\epsilon^\prime \sim \rho_0(\cdot)}[\nabla_\theta f_\theta(s^\prime,\epsilon^\prime)\nabla_a Q^\pi(s^\prime,a^\prime)|_{a^\prime=f_\theta(s^\prime,\epsilon^\prime)}]ds^\prime \big] + \\
&\ \ \ \ \    \mathbb{E}_{\epsilon \sim \rho_0(\cdot)} \big[ \int_{\mathcal{S}} \gamma p(s\rightarrow s^\prime, 1, f_\theta(s,\epsilon^\prime)) \mathbb{E}_{\epsilon^\prime \sim \rho_0(\cdot)} \big[\int_{\mathcal{S}} \gamma p(s^\prime \rightarrow s^{\prime\prime}, 1, f_\theta(s^\prime,\epsilon^\prime))\nabla_\theta V^\pi(s^{\prime\prime})ds^{\prime\prime} \big] ds^\prime \big]\\
&= ...\\
&= \int_{\mathcal{S}} \sum_{t=0}^\infty \gamma^t p(s\rightarrow s^\prime, t, \pi) \mathbb{E}_{\epsilon^\prime \sim \rho_0(\cdot)} \big[\nabla_\theta f_\theta(s^\prime, \epsilon^\prime) \nabla_a Q^\pi(s^\prime,a^\prime)|_{a^\prime = f_\theta(s,\epsilon^\prime)}\big] ds^\prime.
\end{align*}
With the above, we derive the pathwise policy gradient as follows
\begin{align*}
\nabla_\theta J(\pi_\theta) &= \nabla_\theta \int_\mathcal{S} p_1(s) V^\pi(s) ds \\
&= \int_\mathcal{S} p_1(s) \nabla_\theta V^\pi(s) ds\\
&= \int_\mathcal{S} \int_{\mathcal{S}} \sum_{t=0}^\infty \gamma^t p_1(s) p(s\rightarrow s^\prime, t,\pi) ds \mathbb{E}_{\epsilon^\prime \sim \rho_0(\cdot)} \big[\nabla_\theta f_\theta(s^\prime, \epsilon^\prime) \nabla_a Q^\pi(s^\prime,a^\prime)|_{a^\prime = f_\theta(s,\epsilon^\prime)}\big] ds^\prime\\
&= \int_\mathcal{S} \rho_\pi(s^\prime) \mathbb{E}_{\epsilon^\prime \sim \rho_0(\cdot)} \big[\nabla_\theta f_\theta(s^\prime, \epsilon^\prime) \nabla_a Q^\pi(s^\prime,a^\prime)|_{a^\prime = f_\theta(s,\epsilon^\prime)}\big] ds^\prime,
\end{align*}
where $\rho_\pi(s^\prime) = \int_\mathcal{S} \sum_{t=0}^\infty \gamma^t p_1(s) p(s\rightarrow s^\prime, t, \pi)ds$ is the discounted state visitation probability under policy $\pi$. Writing the whole integral as an expectation over states, the policy gradient is
\begin{align*}
\nabla_\theta J(\pi_\theta) &= \mathbb{E}_{s \sim \rho_\pi(s)}\big[\mathbb{E}_{\epsilon \sim \rho_0(\cdot)}[\nabla_\theta f_\theta(s,\epsilon) \nabla_a Q^\pi(s,a)|_{a = f_\theta(s,\epsilon)}]\big].
\end{align*}
which is equivalent to in (\ref{eq:pathwisepolicy}) in \cref{thm:pathgrad}. 
\end{proof}
We can recover the result for deterministic policy gradient by using a degenerate functional form $f_\theta(s,\epsilon) = f_\theta(s)$, i.e. with a deterministic function to compute actions.
\subsection{Unbiased Entropy Gradient}
\begin{restatable}[Optimal Classifier as Density Estimator]{lem}{classifier}
\label{lem:classifier}
Assume $c_\psi$ is expressive enough to represent any classifier (for example $c_\psi$ is a deep neural net). Assume $\mathcal{A}$ to be bounded and let $U(\mathcal{A})$ be uniform distribution over $\mathcal{A}$. Let $\psi^\ast$ be the optimizer to the optimization problem in (\ref{eq:critic}). Then $c_{\psi^\ast}(s,a) = \log \frac{\pi_\theta(a|s)}{|\mathcal{A}|^{-1}}$ and $|\mathcal{A}|$ is the volume of $\mathcal{A}$.
\end{restatable}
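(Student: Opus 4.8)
The plan is to recognize this as the standard optimal-discriminator computation for a GAN-style two-sample objective (as in \cite{goodfellow2015}), in which the ``real'' distribution is the policy density $\pi_\theta(\cdot|s)$ and the ``fake'' distribution is the uniform reference $U(\mathcal{A})$ with constant density $|\mathcal{A}|^{-1}$ on $\mathcal{A}$. First I would fix the state $s$, introduce the shorthand $D(a) = \sigma(c_\psi(a,s)) \in (0,1)$, and rewrite the objective in (\ref{eq:critic}) as a single integral over the action space by expanding the two expectations against their respective densities:
\begin{align*}
L(\psi) = \int_{\mathcal{A}} \Big[ -\pi_\theta(a|s)\log D(a) - |\mathcal{A}|^{-1}\log\big(1 - D(a)\big) \Big]\, da.
\end{align*}

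Next I would invoke the expressiveness hypothesis on $c_\psi$: because $c_\psi$ can represent any classifier, the value $D(a)$ may be chosen freely and independently at each point $a$, so minimizing the integral reduces to minimizing the integrand pointwise. For fixed $a$, setting $p = \pi_\theta(a|s)$ and $q = |\mathcal{A}|^{-1}$, consider $g(y) = -p\log y - q\log(1-y)$ on $(0,1)$. Solving $g'(y) = -p/y + q/(1-y) = 0$ yields the unique critical point $y^\ast = p/(p+q)$, and since $g''(y) = p/y^2 + q/(1-y)^2 > 0$ the function $g$ is strictly convex, so $y^\ast$ is its global minimizer. Hence the pointwise optimal discriminator is $D^\ast(a) = \pi_\theta(a|s)\big/\big(\pi_\theta(a|s) + |\mathcal{A}|^{-1}\big)$.

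Finally I would recover $c_{\psi^\ast}$ by inverting the sigmoid. From $\sigma\big(c_{\psi^\ast}(a,s)\big) = p/(p+q)$ and the elementary identity $\sigma(x) = p/(p+q) \iff x = \log(p/q)$, I obtain $c_{\psi^\ast}(a,s) = \log\frac{\pi_\theta(a|s)}{|\mathcal{A}|^{-1}}$, which is exactly the claimed density-ratio estimator.

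The main obstacle is the pointwise-optimization step: one must argue carefully that the global minimizer of the integral functional is realized by the function that minimizes the integrand at each $a$ separately, and this is precisely where the expressiveness assumption on $c_\psi$ is indispensable, since an arbitrary restricted parametric family could not, in general, attain $y^\ast(a)$ simultaneously for every $a$. A minor technical point worth handling is that $y^\ast = p/(p+q)$ lies strictly in the interior $(0,1)$ wherever $\pi_\theta(a|s)$ is finite, so the logarithms and the sigmoid inversion remain well-defined; the measure-zero set where $\pi_\theta(a|s)$ vanishes can be neglected as it does not affect the conclusion.
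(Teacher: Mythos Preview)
Your proposal is correct and is precisely the standard GAN optimal-discriminator derivation that the paper invokes; the paper's own proof simply observes that (\ref{eq:critic}) is a binary classification problem between $\pi_\theta(\cdot|s)$ and $\mathcal{U}(\mathcal{A})$ and defers the details to \cite{goodfellow2015}, which is exactly the computation you have written out.
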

\begin{proof}
Observe that (\ref{eq:critic}) is a binary classification problem with data from $a \sim \pi_\theta(\cdot|s)$ against $a\sim \mathcal{U}(\mathcal{A})$. The optimal classifier of the problem produces the density ratio of these two distributions. See for example \cite{goodfellow2015} for a detailed proof.
\end{proof}

\entgrad*
\begin{proof}
Let $\pi_\theta(\cdot|s)$ be the density of implicit policy $a = f_\theta(\epsilon,s), \epsilon\sim \rho_0(\cdot)$. The entropy is computed as follows
\begin{align*}\mathbb{H}\big[\pi_\theta(\cdot|s)\big]= - \mathbb{E}_{a \sim \pi_\theta(\cdot|s)}\big[\log \pi_\theta(a|s)\big].\end{align*}

Computing its gradient 
\begin{align}
\nabla_\theta \mathbb{H}\big[\pi_\theta(\cdot|s)\big] &=  - \nabla_\theta \mathbb{E}_{\epsilon\sim \rho_0(\cdot)}\big[\log \pi_\theta(f_\theta(s,\epsilon)|s)\big] \nonumber \\
&= -\mathbb{E}_{a\sim \pi_\theta(\cdot|s)}\big[\nabla_\theta \log \pi_\theta(a|s)\big]   -\mathbb{E}_{\epsilon\sim \rho_0(\cdot)}\big[\nabla_a \log \pi_\theta(a|s)|_{a=f_\theta(s,\epsilon)} \nabla_\theta f_\theta(s,\epsilon)\big] \nonumber\\
&= -\mathbb{E}_{\epsilon\sim \rho_0(\cdot)}\big[\nabla_a \log \pi_\theta(a|s)|_{a=f_\theta(s,\epsilon)} \nabla_\theta f_\theta(s,\epsilon)\big] \nonumber \\
&= -\mathbb{E}_{\epsilon \sim \rho_0(\cdot)}\big[\nabla_a c_{\psi^\ast}(f(\theta,\epsilon),s) \nabla_\theta f_\theta(s,\epsilon)\big]
\end{align}
In the second line we highlight the fact that the expectation depends on parameter $\theta$ both implicitly through the density $\pi_\theta$ and through the sample $f_\theta(s,\epsilon)$. After decomposing the gradient using chain rule, we find that the first term vanishes, leaving the result shown in the theorem.
\end{proof}

\subsection{Lower Bound}
We recall that given a policy $\pi$, the standard RL objective is $J(\pi) = \mathbb{E}_{\pi}\big[ \sum_{t=0}^\infty \gamma^t r_t \big]$. In maximum entropy formulation, the maximum entropy objective is 
\begin{align}
J_{\text{MaxEnt}}(\pi) = \mathbb{E}_{\pi}\big[\sum_{t=0}^\infty \gamma^t (r_t + \beta \mathbb{H}[\pi(\cdot|s_t)])\big],
\label{eq:maxrlobj}
\end{align}
where $\beta > 0$ is a regularization constant and $\mathbb{H}(\pi(\cdot|s_t))$ is the entropy of policy $\pi$ at $s_t$. We construct a surrogate objective based on another policy $\tilde{\pi}$ as follows
\begin{align}
J_{\text{surr}}(\pi,\tilde{\pi}) = \mathbb{E}_{\pi}\big[\sum_{t=0}^\infty \gamma^t r_t \big] + \beta \mathbb{E}_{\tilde{\pi}}\big[\sum_{t=0}^\infty \gamma^t \mathbb{H}[\pi(\cdot|s_t)]\big].
\label{eq:surrobj}
\end{align}
The following proof highly mimics the proof in \cite{schulman2015}. We have the following definition for coupling two policies
\begin{restatable}[$\alpha-$coupled]{definition}{coupling}
\label{def:coupled}
Two policies $\pi,\tilde{\pi}$ are $\alpha-$coupled if $\mathbb{P}(\pi(\cdot|s) \neq \tilde{\pi}(\cdot|s)) \leq \alpha$ for any $s\in \mathcal{S}$.
\end{restatable}
\begin{restatable}[]{lem}{coupling}
\label{lemma:bounding}
Given $\pi,\tilde{\pi}$ are $\alpha-$coupled, then 
\begin{align*}
|\mathbb{E}_{s_t \sim \pi}\big[\mathbb{H}[\pi(\cdot|s_t)]\big] - \mathbb{E}_{s_t\sim \tilde{\pi}}\big[\mathbb{H}[\tilde{\pi}(\cdot|s_t)]\big]| \leq 2(1-(1-\alpha)^t) \max_{s} |\mathbb{H}[\pi(\cdot|s)]|.
\end{align*}
\end{restatable}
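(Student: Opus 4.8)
The plan is to follow the coupling argument of \cite{schulman2015}. The central idea is to run the two policies $\pi$ and $\tilde{\pi}$ simultaneously from a common initial state under a single joint distribution (a coupling) in which, at every visited state, the two sampled actions coincide with probability at least $1-\alpha$; \cref{def:coupled} guarantees that such a per-state coupling of the action draws exists. I would additionally couple the environment transitions so that identical state--action pairs are sent to identical successor states. Under this joint process the two trajectories remain identical exactly as long as the sampled actions have never disagreed.

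Next I would introduce the disagreement counter $n_t$, the number of time steps $i < t$ at which $a_i \neq \tilde{a}_i$ under the coupling. Because each step independently preserves agreement with probability at least $1-\alpha$, we have $\mathbb{P}(n_t = 0) \geq (1-\alpha)^t$, and hence $\mathbb{P}(n_t \geq 1) \leq 1 - (1-\alpha)^t$. The key structural observation is that on the event $\{n_t = 0\}$ the two trajectories are identical through time $t$, so in particular $s_t$ is the \emph{same} random variable under both marginals.

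With this in place I would split each of the two expectations according to whether $n_t = 0$ or $n_t \geq 1$,
\[
\mathbb{E}_{s_t \sim \pi}\big[\mathbb{H}[\pi(\cdot|s_t)]\big] = \mathbb{P}(n_t=0)\, \mathbb{E}\big[\mathbb{H}[\pi(\cdot|s_t)] \,\big|\, n_t = 0\big] + \mathbb{P}(n_t \geq 1)\, \mathbb{E}\big[\mathbb{H}[\pi(\cdot|s_t)]\,\big|\, n_t \geq 1\big],
\]
and analogously for the $\tilde{\pi}$ term. On $\{n_t = 0\}$ the state $s_t$ is common to both processes, so the leading contributions carry the same state distribution and cancel upon subtraction (this is precisely the form of the quantity needed in \cref{thm:lowerbound}, where the entropy functional $\mathbb{H}[\pi(\cdot|\cdot)]$ is common to both terms). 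What remains is the difference of the two $\{n_t \geq 1\}$ contributions. Bounding each conditional expectation in absolute value by $\max_s |\mathbb{H}[\pi(\cdot|s)]|$ and applying the triangle inequality together with $\mathbb{P}(n_t \geq 1) \leq 1 - (1-\alpha)^t$ yields the claimed factor $2(1-(1-\alpha)^t)\max_s|\mathbb{H}[\pi(\cdot|s)]|$.

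The main obstacle I anticipate is making the cancellation on $\{n_t = 0\}$ fully rigorous: it requires constructing a single coupling that simultaneously realizes the per-state action-disagreement bound $\alpha$ from \cref{def:coupled} and keeps the transition noise shared, so that $n_t = 0$ genuinely forces $s_t = \tilde{s}_t$ and the two leading terms agree term by term rather than only in distribution. A secondary subtlety is that the displayed inequality compares entropies of two \emph{different} policies along their respective state distributions; the clean cancellation relies on treating $\mathbb{H}[\pi(\cdot|s)]$ as a fixed bounded function of $s$ and comparing it under the two state-visitation measures, which is exactly the regime in which the bound is invoked in \cref{thm:lowerbound}.
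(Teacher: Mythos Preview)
Your proposal is correct and follows essentially the same coupling argument as the paper: introduce the disagreement counter $n_t$, split both expectations on $\{n_t=0\}$ versus $\{n_t>0\}$, use $\mathbb{P}(n_t>0)\le 1-(1-\alpha)^t$, cancel the $\{n_t=0\}$ parts, and bound each residual by $\max_s|\mathbb{H}[\pi(\cdot|s)]|$. Your remarks about sharing transition randomness and about treating $\mathbb{H}[\pi(\cdot|\cdot)]$ as the common test function (rather than $\mathbb{H}[\tilde\pi(\cdot|\cdot)]$) are apt---the paper's proof in fact silently works with $\mathbb{H}[\pi(\cdot|s_t)]$ on both sides, which is exactly the form used in \cref{thm:lowerbound}.
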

\begin{proof}
 Let $n_t$ denote the number of times that $a_i \neq \tilde{a}_i$ for $i<t$, i.e. the number of times that $\pi,\tilde{\pi}$ disagree before time $t$. We can decompose the expectations as follows
\begin{align*}
\mathbb{E}_{s_t \sim \pi}\big[\mathbb{H}[\pi(\cdot|s_t)]\big] &= \mathbb{E}_{s_t \sim \pi|n_t = 0}\big[\mathbb{H}[\pi(\cdot|s_t)]\big]\mathbb{P}(n_t = 0) + \mathbb{E}_{s_t \sim \pi|n_t > 0}\big[\mathbb{H}[\pi(\cdot|s_t)]\big]\mathbb{P}(n_t > 0),\\
\mathbb{E}_{s_t \sim \tilde{\pi}}\big[\mathbb{H}[\pi(\cdot|s_t)]\big] &= \mathbb{E}_{s_t \sim \tilde{\pi}|n_t = 0}\big[\mathbb{H}[\pi(\cdot|s_t)]\big]\mathbb{P}(n_t = 0) + \mathbb{E}_{s_t \sim \tilde{\pi}|n_t > 0}\big[\mathbb{H}[\pi(\cdot|s_t)]\big]\mathbb{P}(n_t > 0). 
\end{align*}
Note that $n_t = 0$ implies $a_i = \tilde{a}_i$ for all $i<t$ hence
\begin{align*}\mathbb{E}_{s_t \sim \tilde{\pi}|n_t = 0}\big[\mathbb{H}[\pi(\cdot|s_t)]\big] =  \mathbb{E}_{s_t \sim \pi|n_t = 0}\big[\mathbb{H}[\pi(\cdot|s_t)]\big].\end{align*}
The definition of $\alpha-$coupling implies $\mathbb{P}(n_t = 0)\geq (1-\alpha)^t$, and so $\mathbb{P}(n_t > 0)\leq 1 - (1-\alpha)^t$. Now we note that
\begin{align*} |\mathbb{E}_{s_t \sim \pi|n_t > 0}\big[\mathbb{H}[\pi(\cdot|s_t)]\big]\mathbb{P}(n_t > 0) - \mathbb{E}_{s_t \sim \tilde{\pi}|n_t > 0}\big[\mathbb{H}[\pi(\cdot|s_t)]\big]\mathbb{P}(n_t > 0)|\leq 2(1-(1-\alpha)^t)\max_s |\mathbb{H}[\pi(\cdot|s)]|.\end{align*}
Combining previous observations, we have proved the lemma. 
\end{proof}
Note that if we take $\pi = \pi_\theta,\tilde{\pi} = \pi_{\theta_{\text{old}}}$, then the surrogate objective $J_{\text{surr}}(\pi_\theta)$ in (\ref{eq:surrpolicyobj}) is equivalent to $J_\text{surr}(\pi,\tilde{\pi})$ defined in (\ref{eq:surrobj}). With \cref{lemma:bounding}, we prove the following theorem.

\lowerbound*
\begin{proof}
We first show the result for general policies $\pi$ and $\tilde{\pi}$ with $\mathbb{KL}\big[\pi||\tilde{\pi}\big]\leq \alpha$. As a result, \cite{schulman2015} shows that $\pi,\tilde{\pi}$ are $\sqrt{\alpha}-$coupled. Recall the maximum entropy objective $J_{\text{MaxEnt}}(\pi)$ defined in (\ref{eq:maxrlobj}) and surrogate objective in (\ref{eq:surrobj}), take the difference of two objectives
\begin{align*}
|J_{\text{MaxEnt}}(\pi) - J_{\text{surr}}(\pi,\tilde{\pi})| &= \beta|\sum_{t=0}^\infty \gamma^t \mathbb{E}_{s_t \sim \pi}\big[\mathbb{H}[\pi(\cdot|s_t)]\big] - \mathbb{E}_{s_t\sim \tilde{\pi}}\big[\mathbb{H}[\tilde{\pi}(\cdot|s_t)]\big]|\\
&\leq \beta\sum_{t=0}^\infty \gamma^t |\mathbb{E}_{s_t \sim \pi}\big[\mathbb{H}[\pi(\cdot|s_t)]\big] - \mathbb{E}_{s_t\sim \tilde{\pi}}\big[\mathbb{H}[\tilde{\pi}(\cdot|s_t)]\big]|\\
&\leq \beta\sum_{t=0}^\infty \gamma^t (1-(1-\sqrt{\alpha})^t) \max_s |\mathbb{H}[\pi(\cdot|s)]|\\
&= (\frac{1}{1-\gamma} - \frac{1}{1-\gamma(1-\sqrt{\alpha})})\beta\sqrt{\alpha}  \max_s |\mathbb{H}[\pi(\cdot|s)]|\\
&= \frac{\beta\gamma\sqrt{\alpha}}{(1-\gamma)^2}  \max_s |\mathbb{H}[\pi(\cdot|s)]|\\
\end{align*}
Now observe that by taking $\pi = \pi_\theta,\tilde{\pi} = \pi_{\theta_{\text{old}}}$, the above inequality implies the theorem.
\end{proof}
In practice, $\alpha-$coupling enforced by KL divergence is often relaxed \cite{schulman2015,schulman2017}. The theorem implies that, by constraining the KL divergence between consecutive policy iterates, the surrogate objective of entropy regularization maximizes a lower bound of maximum entropy objective.

\section{Operator view of Entropy Regularization and Maximum Entropy RL}
Recall in standard RL formulation, the agent is in state $s$, takes action $a$, receives reward $r$ and transitions to $s^\prime$. Let the discount factor $\gamma < 1$. Assume that the reward $r$ is deterministic and the transitions $s^\prime \sim p(\cdot|s,a)$ are deterministic, i.e. $s^\prime = f(s,a)$, it is straightforward to extend the following to general stochastic transitions. For a given policy $\pi$, define linear Bellman operator as
\begin{align*}\mathcal{T}^\pi Q(s,a) = r + \gamma \mathbb{E}_{a^\prime\sim \pi(\cdot|s^\prime)}\big[Q(s^\prime,a^\prime)\big].\end{align*}

Any policy $\pi$ satisfies the linear Bellman equation $\mathcal{T}^\pi Q^\pi = Q^\pi$. Define Bellman optimality operator (we will call it Bellman operator) as
\begin{align*}\mathcal{T}^\ast Q(s,a) = r + \gamma \max_{a^\prime} Q(s^\prime,a^\prime).\end{align*}
Now we define Mellowmax operator \cite{asadi2017,tuomas2017} with parameter $\beta > 0$ as follows,
\begin{align*}\mathcal{T}_{s} Q(s,a) = r + \gamma \beta \log \int_{a^\prime\in\mathcal{A}} \exp(\frac{Q(s^\prime,a^\prime)}{\beta})da^\prime.\end{align*}
It can be shown that both $\mathcal{T}^\ast$ and $\mathcal{T}$ are contractive operator when $\gamma < 1$. Let $Q^\ast$ be the unique fixed point of $\mathcal{T}^\ast Q = Q$, then $Q^\ast$ is the action value function of the optimal policy $\pi^\ast = \arg\max_\pi J(\pi)$. Let $Q_s^\ast$ be the unique fixed point of $\mathcal{T}_s Q = Q$, then $Q_s^\ast$ is the soft action value function of $\pi_s^\ast = \arg\max_\pi J_{\text{MaxEnt}}(\pi)$. In addition, the optimal policy $\pi^\ast(\cdot|s) = \arg\max_a Q^\ast(s,a)$ and $\pi_s^\ast(a|s) \propto \exp(\frac{Q_s^\ast(s,a)}{\beta})$.

Define Boltzmann operator with parameter $\beta$ as follows
\begin{align*}\mathcal{T}_BQ(s,a) = r + \gamma \mathbb{E}_{a^\prime \sim p_B(\cdot|s^\prime)}\big[Q(s^\prime,a^\prime)\big],\end{align*}

where $p_B(a^\prime|s^\prime) \propto \exp(\frac{Q(s^\prime,a^\prime)}{\beta})$ is the Boltzmann distribution defined by $Q(s^\prime,a^\prime)$. \cite{donoghue2017} shows that the stationary points of entropy regularization procedure are policies of the form $\pi(a|s)\propto \exp(\frac{Q^\pi(s,a)}{\beta})$. We illustrate the connection between such stationary points and fixed points of Boltzmann operator as follows.
\begin{restatable}[Fixed points of Boltzmann Operators]{thm}{boltzmannoperators}
\label{thm:boltzmannoperators}
Any fixed point $Q(s,a)$ of Boltzmann operator $\mathcal{T}_B Q = Q$, defines a stationary point for entropy regularized policy gradient algorithm by $\pi(a|s) \propto \exp(\frac{Q(s,a)}{\beta})$; reversely, any stationary point of entropy regularized policy gradient algorithm $\pi$, has its action value function $Q^\pi(s,a)$ as a fixed point to Boltzmann operator.
\end{restatable}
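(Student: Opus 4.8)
The plan is to reduce the entire statement to one structural identity: when the policy is the Boltzmann policy induced by a given $Q$, namely $\pi_Q(a|s) \propto \exp(Q(s,a)/\beta)$, the Boltzmann operator $\mathcal{T}_B$ applied to $Q$ coincides with the \emph{linear} Bellman operator $\mathcal{T}^{\pi_Q}$ applied to the same $Q$. This is immediate from the definitions, because the sampling distribution $p_B(\cdot|s')$ that appears inside $\mathcal{T}_B Q$ is by construction exactly $\pi_Q(\cdot|s')$, so that
\begin{align*}
\mathcal{T}_B Q(s,a) = r + \gamma\, \mathbb{E}_{a'\sim p_B(\cdot|s')}\big[Q(s',a')\big] = r + \gamma\, \mathbb{E}_{a'\sim \pi_Q(\cdot|s')}\big[Q(s',a')\big] = \mathcal{T}^{\pi_Q} Q(s,a).
\end{align*}
I will also invoke two facts already available: that $\mathcal{T}^{\pi}$ is a contraction for $\gamma<1$ and hence has $Q^{\pi}$ as its unique fixed point (so $\mathcal{T}^{\pi}Q=Q$ forces $Q=Q^{\pi}$), and the characterization from \cite{donoghue2017} that the stationary points of the entropy regularized policy gradient procedure are precisely the policies of the form $\pi(a|s)\propto \exp(Q^{\pi}(s,a)/\beta)$.

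For the forward direction, I would start from a fixed point $Q$ of $\mathcal{T}_B$ and set $\pi = \pi_Q$. The identity above turns $\mathcal{T}_B Q = Q$ into the linear Bellman equation $\mathcal{T}^{\pi} Q = Q$, so uniqueness of the fixed point of $\mathcal{T}^{\pi}$ gives $Q = Q^{\pi}$. Substituting back into the definition of $\pi$ yields $\pi(a|s)\propto \exp(Q^{\pi}(s,a)/\beta)$, which is exactly the stationary-point form of \cite{donoghue2017}; hence $\pi$ is a stationary point of the entropy regularized update, as claimed. For the reverse direction, I would start from a stationary point $\pi$ of the entropy regularized procedure; by \cite{donoghue2017} it satisfies $\pi(a|s)\propto \exp(Q^{\pi}(s,a)/\beta)$, which says precisely that $\pi$ is the Boltzmann policy induced by $Q^{\pi}$, i.e. $\pi = \pi_{Q^{\pi}}$. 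Applying the identity with $Q = Q^{\pi}$ and using the defining linear Bellman equation $\mathcal{T}^{\pi}Q^{\pi} = Q^{\pi}$ gives $\mathcal{T}_B Q^{\pi} = \mathcal{T}^{\pi} Q^{\pi} = Q^{\pi}$, so $Q^{\pi}$ is a fixed point of the Boltzmann operator.

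The proof is short, so the main work is conceptual rather than computational: the crux is recognizing that the self-referential definition of $\mathcal{T}_B$ (the sampling distribution is generated by the very function being transformed) collapses to an ordinary policy-evaluation operator once the policy is fixed to be the corresponding Boltzmann policy. The only place requiring care is the correct, cited use of two external ingredients---the contraction/uniqueness property of $\mathcal{T}^{\pi}$ and the \cite{donoghue2017} characterization of stationary points---and making sure the Boltzmann normalizing constants match on both sides so that $p_B = \pi_Q$ holds as \emph{distributions}, not merely up to proportionality. I do not anticipate a serious obstacle beyond keeping the two fixed-point notions ($\mathcal{T}_B$ versus $\mathcal{T}^{\pi}$) clearly separated throughout.
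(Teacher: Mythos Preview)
Your proposal is correct and follows essentially the same route as the paper: both directions hinge on the identity $\mathcal{T}_B Q = \mathcal{T}^{\pi_Q} Q$ together with the \cite{donoghue2017} characterization of stationary points, and your reverse direction is verbatim the paper's. Your forward direction is slightly more explicit than the paper's (which simply asserts the entropy-regularized gradient vanishes), in that you invoke the contraction of $\mathcal{T}^{\pi}$ to pin down $Q = Q^{\pi}$ before applying the characterization; this fills in a step the paper leaves implicit but does not constitute a genuinely different approach.
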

 \begin{proof}
 Take any fixed point $Q$ of Boltzmann operator, $\mathcal{T}_BQ = Q$, define a policy $\pi(a|s) \propto \exp(\frac{Q(s,a)}{\beta})$. From the definition of Boltzmann operator, we can easily check that $\pi$'s entropy regularized policy gradient is exactly zero, hence it is a stationary point for entropy regularized gradient algorithm. 
 
Take any policy $\pi$ such that its entropy regularized gradient is zero, from \cite{donoghue2017} we know for such policy $\pi(a|s) \propto \exp(Q^\pi(s,a))$. The linear Bellman equation for such a policy $\mathcal{T}^\pi Q^\pi = Q^\pi$ translates directly into the Boltzmann equation $\mathcal{T}_B Q^\pi = Q^\pi$. Hence $Q^\pi$ is indeed a fixed point of Boltzmann operator.
 \end{proof}
The above theorem allows us to associate the policies trained by entropy regularized policy gradient with the Boltzmann operator. Unfortunately, \cite{asadi2017} shows that unlike MellowMax operator, Boltzmann operator does not have unique fixed point and is not a contractive operator in general, though this does not necessarily prevent policy gradient algorithms from converging. We make a final observation that shows that Boltzmann operator $\mathcal{T}_B$ interpolates Bellman operator $\mathcal{T}^\ast$ and MellowMax operator $\mathcal{T}_s$: for any $Q$ and fixed $\beta > 0$ (see \cref{thm:operators}), 
\begin{align}\mathcal{T}^\ast Q \geq \mathcal{T}_B Q \geq \mathcal{T}_s Q.
\label{eq:operators}
\end{align}
If we view all operators as picking out the largest value among $Q(s^\prime,a^\prime),a^\prime \in \mathcal{A}$ in next state $s^\prime$, then $\mathcal{T}^\ast$ is the greediest and $\mathcal{T}_s$ is the most conservative, as it incorporates trajectory entropy as part of the objective. $\mathcal{T}_B$ is between these two operators, since it looks ahead for only one step. The first inequality in (\ref{eq:operators}) is trivial, now we show the second inequality.
\begin{restatable}[Boltzmann Operator is greedier than Mellowmax Operator]{thm}{operators}
\label{thm:operators}
For any $Q \in \mathbb{R}^n$, we have $\mathcal{T}_BQ \geq \mathcal{T}_sQ$, and the equality is tight if and only if $Q_i = Q_j$ for $\forall i,j$.
\end{restatable}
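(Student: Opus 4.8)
The plan is to reduce the claimed inequality $\mathcal{T}_B Q \geq \mathcal{T}_s Q$ to a pointwise comparison at a single state and then to recognize the resulting gap as a relative entropy. Since $\mathcal{T}_B Q$ and $\mathcal{T}_s Q$ act on the same next-state values $Q(s',\cdot)$ through the common prefix $r + \gamma(\cdot)$, it suffices to fix the next state $s'$, write $q_i = Q(s',a_i)$ for the $n$ actions, and compare the Boltzmann average $\sum_i p_i q_i$ against the (normalized) Mellowmax value $\beta\log\big(\tfrac{1}{n}\sum_i e^{q_i/\beta}\big)$. Here $p_i = e^{q_i/\beta}/Z$ with $Z = \sum_j e^{q_j/\beta}$ is exactly the Boltzmann distribution $p_B$ appearing in the definition of $\mathcal{T}_B$.

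The key step I would carry out is a direct substitution. Taking logarithms of $p_i = e^{q_i/\beta}/Z$ gives $q_i = \beta\log p_i + \beta\log Z$, and plugging this into the Boltzmann average while using $\sum_i p_i = 1$ collapses it to
\begin{align*}
\sum_i p_i q_i = \beta\log Z + \beta\sum_i p_i\log p_i = \beta\log Z - \beta\,\mathbb{H}[p],
\end{align*}
where $\mathbb{H}[p] = -\sum_i p_i\log p_i$ is the entropy of the Boltzmann distribution. The normalized Mellowmax value is just $\beta\log Z - \beta\log n$. Subtracting the two and restoring the common factor $\gamma$, the operator gap becomes
\begin{align*}
\mathcal{T}_B Q - \mathcal{T}_s Q = \gamma\beta\big(\log n - \mathbb{H}[p]\big) = \gamma\beta\,\mathbb{KL}[p\,\|\,u],
\end{align*}
where $u$ is the uniform distribution over the $n$ actions and the last equality is the elementary identity $\log n - \mathbb{H}[p] = \sum_i p_i\log(np_i)$.

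The conclusion is then immediate: relative entropy is nonnegative by Gibbs' inequality, so $\mathcal{T}_B Q \geq \mathcal{T}_s Q$, and it vanishes precisely when $p = u$. Because $p_i \propto e^{q_i/\beta}$ is monotone in $q_i$, the distribution $p$ is uniform if and only if all the $q_i$ coincide, which yields the stated tightness condition $Q_i = Q_j$ for all $i,j$. The continuous-action version is handled identically, with sums replaced by integrals against $U(\mathcal{A})$ and $\log n$ replaced by $\log|\mathcal{A}|$.

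I expect the main obstacle to be purely a matter of normalization rather than of analysis: the inequality holds only when the Mellowmax log-sum-exp is normalized by $\tfrac{1}{n}$ (equivalently, when the integral defining $\mathcal{T}_s$ is taken against the uniform \emph{probability} measure $U(\mathcal{A})$). Without this factor the gap would instead equal $-\gamma\beta\,\mathbb{H}[p] \le 0$ and the inequality would reverse, so I would state the Mellowmax operator in its normalized form from the outset and make sure the same normalizing constant $\log n$ (resp.\ $\log|\mathcal{A}|$) is the one that converts $\log n - \mathbb{H}[p]$ into a genuine KL divergence.
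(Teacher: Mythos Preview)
Your proof is correct and takes a genuinely different route from the paper's. The paper argues analytically: after the same reduction to showing
\[
\frac{\sum_i x_i \log x_i}{\sum_i x_i} \;\ge\; \log\frac{1}{n}\sum_i x_i
\qquad\text{with }x_i=\exp(q_i/\beta),
\]
it defines $J(x)$ as the difference of the two sides, computes $\partial J/\partial x_j$ to locate the stationary points on the diagonal $x_i\equiv x_0$, and then checks that the Hessian there is positive semidefinite, concluding that $J$ attains its minimum value $0$ precisely on that diagonal. Your argument instead substitutes $q_i=\beta\log p_i+\beta\log Z$ and collapses the gap to $\gamma\beta\,\mathbb{KL}[p_B\|u]$, after which nonnegativity and the equality case follow from Gibbs' inequality with no calculus at all. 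Your approach is shorter, reveals the information-theoretic structure directly, and handles the tightness condition in one line; the paper's calculus route is more hands-on but requires an additional (and in the paper somewhat implicit) argument to pass from ``PSD Hessian at every stationary point'' to a global lower bound. Your explicit remark about the $1/n$ normalization is also well taken: the paper's operator definition writes $\int_{\mathcal{A}}\exp(\cdot)\,da'$ without a visible normalizer, but the proof silently inserts the $1/n$ when passing to the discrete inequality, so your caveat is exactly the right thing to flag.
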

\begin{proof}
Recall the definition of both operators, we essentially need to show the following inequality
\begin{align*}\mathbb{E}_{a^\prime \sim p_B(\cdot|s^\prime)}[Q(s^\prime,a^\prime)] \geq \beta \log \int_{a^\prime \in \mathcal{A}} \exp(\frac{Q(s^\prime,a^\prime)}{\beta}).\end{align*}
Without loss of generality, assume there are $n$ actions in total and let $x_i = \exp(\frac{Q(s^\prime,a_i)}{\beta})$ for the $i$th action. The above inequality reduces to
\begin{align*}\frac{\sum_{i=1}^n x_i \log x_i}{\sum_{i=1}^n x_i} \geq \log \frac{1}{n} \sum_{i=1}^n x_i.\end{align*}
We did not find any reference for the above inequality so we provide a proof below. Notice that $x_i > 0, \forall i$. Introduce the objective $J(x) = \frac{\sum_{i=1}^n x_i \log x_i}{\sum_{i=1}^n x_i} - \log \frac{1}{n} \sum_{i=1}^n x_i$. Compute the gradient of $J(x)$,
\begin{align*}\frac{\partial J(x)}{\partial x_j} = \frac{\sum_{i=1}^n (\log x_j - \log x_i) x_i}{(\sum_{i=1}^n x_i)^2},\ \forall j.\end{align*}
The stationary point at which $\frac{\partial J(x)}{\partial x} = 0$ is of the form $x_i = x_j,\forall i,j$. At such point, let $x_i=x_0,\forall i$ for some generic $x_0$. Then we compute the Hessian of $J(x)$ at such stationary point
\begin{align}
\frac{\partial^2 J(x)}{\partial x_j^2}|_{x_i = x_0,\forall i} &= \frac{n-1}{n^2  x_0^2}, \ \forall j. \nonumber \\
\frac{\partial^2 J(x)}{\partial x_k \partial x_j}|_{x_i = x_0,\forall i} &= \frac{1}{n^2  x_0^2},\ \forall  j\neq k. \nonumber 
\end{align}
Let $H(x_0)$ be the Hessian at this stationary point. Let $t\in \mathbb{R}^n$ be any vector and we can show 
\begin{align*}t^T H(x_0) t = \sum_{i\neq j}\frac{1}{n^2 x_0^2} (t_i - t_j)^2\geq 0,\end{align*}
which implies that $H(x_0)$ is positive semi-definite. It is then implied that at such $x_0$ we will achieve local minimum. Let $x_i = x_0,\forall i$ we find $J(x) = 0$, which implies that $J(x) \geq 0,\forall x$. Hence the proof is concluded.
\end{proof}

\section{Implicit Policy Architecture}
\subsection{Normalizing Flows Policy Architecture}
We design the neural network architectures following the idea of \cite{dinh2015,dinh2017}. Recall that Normalizing Flows \cite{rezende2015} consist of layers of transformations as follows
,\begin{align*}
x = g_{\theta_K} \circ g_{\theta_{K-1}} \circ ... \circ g_{\theta_2} \circ g_{\theta_1} (\epsilon),
\end{align*}
where each $g_{\theta_i}(\cdot)$ is an invertible transformation. We focus on how to design each atomic transformation $g_{\theta_i}(\cdot)$. We overload the notations and let $x,y$ be the input/output of a generic layer $g_{\theta}(\cdot)$, 
\begin{align*}y = g_{\theta}(x).\end{align*}
We design a generic transformation $g_\theta(\cdot)$ as follows. Let $x_{I}$ be the components of $x$ corresponding to subset indices $I \subset \{1,2...m\}$. Then we propose as in \cite{dinh2017},
\begin{align}
y_{1:d} &= x_{1:d} \nonumber \\ 
y_{d+1:m} &= x_{d+1:m} \odot \exp(s(x_{1:d})) + t(x_{1:d}),
\label{eq:couplinglayer}
\end{align}
where $t(\cdot),s(\cdot)$ are two arbitrary functions $t,s : \mathbb{R}^{d} \mapsto \mathbb{R}^{m-d}$. It can be shown that such transformation entails a simple Jacobian matrix
$|\frac{\partial y}{\partial x^T}| = \exp(\sum_{j=1}^{m-d} [s(x_{1:d})]_j)$ where $[s(x_{1:d})]_j$ refers to the $j$th component of $s(x_{1:d})$ for $1\leq j\leq m-d$. For each layer, we can permute the input $x$ before apply the simple transformation (\ref{eq:couplinglayer}) so as to couple different components across layers. Such coupling entails a complex transformation when we stack multiple layers of (\ref{eq:couplinglayer}). To define a policy, we need to incorporate state information. We propose to preprocess the state $s \in \mathbb{R}^n$ by a neural network $L_{\theta_s}(\cdot)$ with parameter $\theta_s$, to get a state vector $L_{\theta_s}(s) \in \mathbb{R}^m$. Then combine the state vector into (\ref{eq:couplinglayer}) as follows,
\begin{align}
z_{1:d} &= x_{1:d} \nonumber \\ 
z_{d+1:m} &= x_{d+1:m} \odot \exp(s(x_{1:d})) + t(x_{1:d}) \nonumber \\
y &= z + L_{\theta_s}(s).
\label{eq:statecouplinglayer}
\end{align}

It is obvious that $x \leftrightarrow y$ is still bijective regardless of the form of $L_{\theta_s}(\cdot)$ and the Jacobian matrix is easy to compute accordingly. 

In our experiments, we implement $s,t$ both as 4-layers neural networks with $k=3$ or $k=6$ units per hidden layer. We stack $K=4$ transformations: we implement (\ref{eq:statecouplinglayer}) to inject state information only after the first transformation, and the rest is conventional coupling as in (\ref{eq:couplinglayer}). $L_{\theta_s}(s)$ is implemented as a feedforward neural network with $2$ hidden layers each with $64$ hidden units. Value function critic is implemented as a feedforward neural network with $2$ hidden layers each with $64$ hidden units with rectified-linear between hidden layers.

\subsection{Non-Invertible Blackbox Policy Architecture}
Any implicit model architecture as in \cite{goodfellow2015,dustin2017} can represent a Non-invertible Blackbox Policy (NBP). On MuJoCo control tasks, consider a task with state space $\mathcal{S} \subset \mathbb{R}^n$ and action space $\mathcal{A} \subset \mathbb{R}^m$. Consider a feedforward neural network with $n$ input units and $m$ output units. The intermediate layers have parameters $\theta$ and the output is a deterministic mapping from the input $a = f_\theta(s)$. We choose an architecture similar to NoisyNet \cite{fortunato2017}: introduce a distribution over $\theta$. In our case, we choose factorized Gaussian $\theta = \mu_\theta + \sigma_\theta\cdot \epsilon$. The implicit policy is generated as 
\begin{align*}a = f_\theta(s),\  \theta = \mu_\theta + \sigma_\theta\cdot\epsilon, \ \epsilon \sim \mathcal{N}(0,1),\end{align*}
which induces an implicit distribution over output $a$. In practice, we find randomizing parameters $\theta$ to generate implicit policy works well and is easy to implement, we leave other approaches for future research. 

In all experiments, we implement the network $f_\theta(\cdot)$ as a feedforward neural network with $2$ hidden layers each with $64$ hidden units. Between layers we use rectified-linear for non-linear activation, layer normalization to standardize inputs, and dropout before the last output. Both value function critic and classifier critic are implemented as feedforward neural networks with $2$ hidden layers each with $64$ hidden units with rectified-linear between hidden layers. Note that $\mu_\theta,\sigma_\theta$ are the actual parameters of the model: we initialize $\mu_\theta$ using standard initialization method and initialize $\sigma_\theta = \log(\exp(\rho_\theta)+1)$ with $\rho_\theta \in [-9.0,-1.0]$. For simplicity, we set all $\rho_\theta$ to be the same and let $\rho = \rho_\theta$. We show below that dropout is an efficient technique to represent multi-modal policy.

\paragraph{Dropout for multi-modal distributions.} Dropout \cite{srivastava2014} is an efficient technique to regularize neural networks in supervised learning. However, in reinforcement learning where overfitting is not a big issue, the application of dropout seems limited. Under the framework of implicit policy, we want to highlight that dropout serves as a natural method to parameterize multi-modal distributions. Consider a feed-forward neural network with output $y \in \mathbb{R}^{m}$.  Assume that the last layer is a fully-connected network with $h$ inputs. Let $x\in\mathbb{R}^n$ be an input to the original neural network and $\phi(x) \in\mathbb{R}^{h}$ be the input to the last layer (we get $\phi(x)$ by computing forward pass of $x$ through the network until the last layer), where $\phi(x)$ can be interpreted as a representation learned by previous layers. Let $W \in \mathbb{R}^{m\times h},b\in\mathbb{R}^m$ be the weight matrix and bias vector of the last layer, then the output is computed as (we ignore the non-linear activation at the output)
\begin{align}
y_i = \sum_{j=1}^{h} \phi_j(x) W_{ij} + b_i , \forall \ 1\leq i\leq m.
\label{eq:fc}
\end{align}
If dropout is applied to the last layer, let $z$ be the Bernoulli mask i.e. $z_i \sim \text{Bernoulli}(p), 1\leq i\leq h$ where $p$ is the probability for dropping an input to the layer. Then 
\begin{align}
y_i = \sum_{j=1}^{h} (\phi_j(x)\cdot z_j) W_{ij} + b_i , \forall \ 1\leq i\leq m
\label{eq:dropout}
\end{align}
Given an $i$, if each $\phi_j(x) W_{ij}$ has a different value, their stochastic sum $\sum_{j=1}^h \phi_j(x) \cdot z_j)W_{ij}$ in (\ref{eq:dropout}) can take up to about $2^h$ values. Despite some redundancy in these $2^h$ values, in general $y_i$ in (\ref{eq:dropout}) has a multi-modal distribution supported on multiple values. We have hence moved from a unimodal distribution (\ref{eq:fc}) to a multi-modal distribution (\ref{eq:dropout}) by adding a simple dropout.

\section{Algorithm Pseudocode}
Below we present the pseudocode for an off-policy algorithm to train NBP. On the other hand, for NFP we can apply any on-policy optimization algorithms \cite{schulman2017,schulman2015} and we omit the pseudocode here.

 \begin{algorithm}[H]
	\begin{algorithmic}[1]
		\STATE INPUT:  target parameter update period $\tau$; learning rate $\alpha_\theta,\alpha_\phi,\alpha_\psi$; entropy regularization constant $\beta$.
		\STATE INITIALIZE: parameters $\theta, \phi, \psi$ and target network parameters $\theta^-,\phi^{-}$; replay buffer $B \leftarrow \{\}$; step counter $counter \leftarrow 0$.
		\FOR {$e=1,2,3...E$}
		\WHILE {episode not terminated}
		\STATE \textbf{// Control}
		\STATE $counter \leftarrow counter + 1$.
		\STATE In state $s$, sample noise $\epsilon \sim \rho_0(\cdot)$, compute action $a =  f_\theta(s,\epsilon)$, transition to $s^\prime$ and receive instant reward $r$.
		\STATE Save experience tuple $\{s,a,r,s^\prime\}$ to buffer $B$.
		\STATE Sample $N$ tuples $D = \{s_j,a_j,r_j,s_j^\prime\}$ from $B$.
		\STATE \textbf{// Update Critic}
		\STATE Compute TD error as in \cite{mnih2013,silver2016} as follows, where $a_j^\prime = f_{\theta^-}(s_j^\prime,\epsilon_j),\epsilon_j \sim \rho_0(\cdot)$. 
\begin{align*}J_\phi = \frac{1}{N}\sum_{j=1}^N (Q_\phi(s_j,a_j) - r_j - \gamma Q_{\phi^-}(s_j^\prime,a_j^\prime))^2.\end{align*}
				\STATE Update $\phi \leftarrow \phi - \alpha_\phi \nabla_\phi J_\phi$.
		\STATE \textbf{// Update classifier}
		\STATE Sample $N$ actions uniformly from action space $a_j^{(u)} \sim \mathcal{U}(\mathcal{A})$. Compute classification objective $C_\psi$ (\ref{eq:critic}) using data $\{s_j,a_j\}_{j=1}^N$ against $\{s_j,a_j^{(u)}\}_{j=1}^N$. 
		\STATE Update classifier $\psi \leftarrow \psi - \alpha_\psi \nabla_\psi C_\psi$.
		\STATE \textbf{// Update policy with entropy regularization}.
		\STATE Compute pathwise gradient $\nabla_\theta J(\pi_\theta)$ (\ref{eq:pathwisepolicy}) with $Q^\pi(s,a)$ replaced by critic $Q_\phi(s,a)$ and states replaced by sampled states $s_j $. 
		\STATE Compute entropy gradient $\nabla_\theta \mathbb{H}\big[\pi_\theta(\cdot|s)\big]$ using (\ref{eq:approxentropy}) on sampled data.
		\STATE Update $\theta \leftarrow \theta + \alpha_\theta (\nabla_\theta J(\pi_\theta) + \beta \nabla_\theta \mathbb{H}\big[\pi_\theta(\cdot|s)\big])$
		\IF {$counter \ \text{mod}\  \tau = 0$}
		\STATE Update target parameter $\phi^{-} \leftarrow \phi, \theta^- \leftarrow\theta $.
		\ENDIF
		\ENDWHILE
		\ENDFOR
	\end{algorithmic}
	\caption{Non-invertible Blackbox Policy (NBF) Off-policy update}
\end{algorithm}

\section{Additional Experiment Results}
\subsection{Locomotion tasks}
As has been shown in previous works \cite{schulman2017}, PPO is almost the most competitive on-policy optimization baseline on locomotion control tasks. We provide a table of comparison among on-policy baselines below. On each task we train for a specified number of time steps and report the average results over $5$ random seeds. Though NFP remains a competitive algorithm, PPO with unimodal Gaussian generally achieves better performance.

\begin{minipage}{\linewidth}
\centering
%\captionof{table}{MuJoCo Benchmark Tasks} \label{tab:title} 
\begin{tabular}{ C{1.2in} C{.55in} *5{C{.45in}}}\toprule[1.5pt]
\bf Tasks & \bf Timesteps  & \bf PPO & \bf A2C & \bf CEM & \bf TRPO & \bf NFP  \\\midrule
 Hopper & $10^6$ & $\mathbf{\approx 2300}$ & $\approx 900$ & $\approx 500$ & $\mathbf{\approx 2000}$ & $\approx 1880$\\ 
 HalfCheetah & $10^6$ & $\mathbf{\approx 1900}$ & $\approx 1000$ & $\approx 500$ & $\approx 0$ & $\mathbf{\approx 2200}$ \\ 
  Walker2d & $10^6$ & $\mathbf{\approx 3500}$ & $\approx 900$ & $\approx 800$ & $\approx 1000$ & $\mathbf{\approx 1980}$\\ 
  InvertedDoublePendulum & $10^6$ & $\mathbf{\approx 8000}$ & $\approx 6500$ & $\approx 0$ & $\approx 0$ & $\mathbf{\approx 8000}$ \\
\bottomrule[1.25pt]
\end {tabular}\par
\bigskip
\small{Table 1: A comparison of NFP with (on-policy) baseline algorithms on MuJoCo benchmark tasks. For each task, we show the average rewards achieved after training the agent for a fixed number of time steps. The results for NFP are averaged over 5 random seeds. The results for A2C, CEM \cite{duanxi2016} and TRPO are approximated based on the figures in \cite{schulman2017}, PPO is from OpenAI baseline implementation \cite{baselines}. We highlight the top two algorithms for each task in bold font. PPO, A2C and TRPO all use unimodal Gaussians. PPO is the most competitive.}
\end{minipage}
\subsection{Multi-modal policy: Fine-tuning for downstream tasks}
In Figure \ref{figure:appendix:trajs}, we compare trajectories generated by agents pre-trained by DDPG and NBP on the running task. Since DDPG uses a deterministic policy, starting from a fixed position, the agent can only run in a single direction. On the other hand, NBP agent manages to run in multiple directions. This comparison partially illustrates that a NBP agent can learn the concept of \emph{general running}, instead of \emph{specialized running} -- running in a particular direction.
\begin{figure}[h]
\centering
\subfigure[Trajectories by DDPG agent]{\includegraphics[width=.46\linewidth]{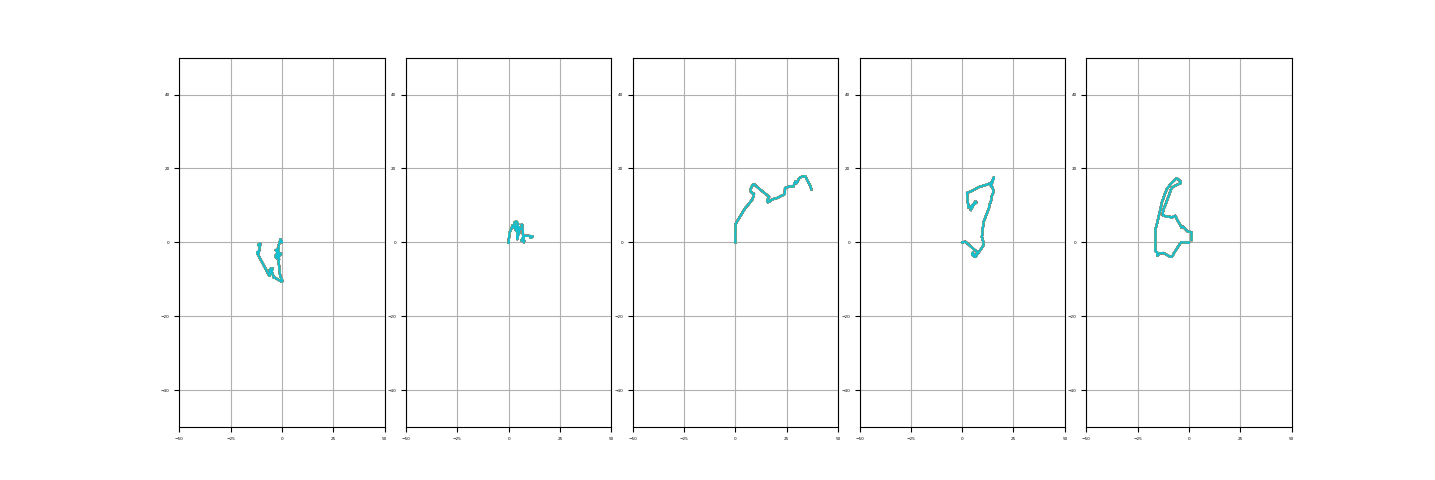}}
\subfigure[Trajectories by NBP agent]{\includegraphics[width=.46\linewidth]{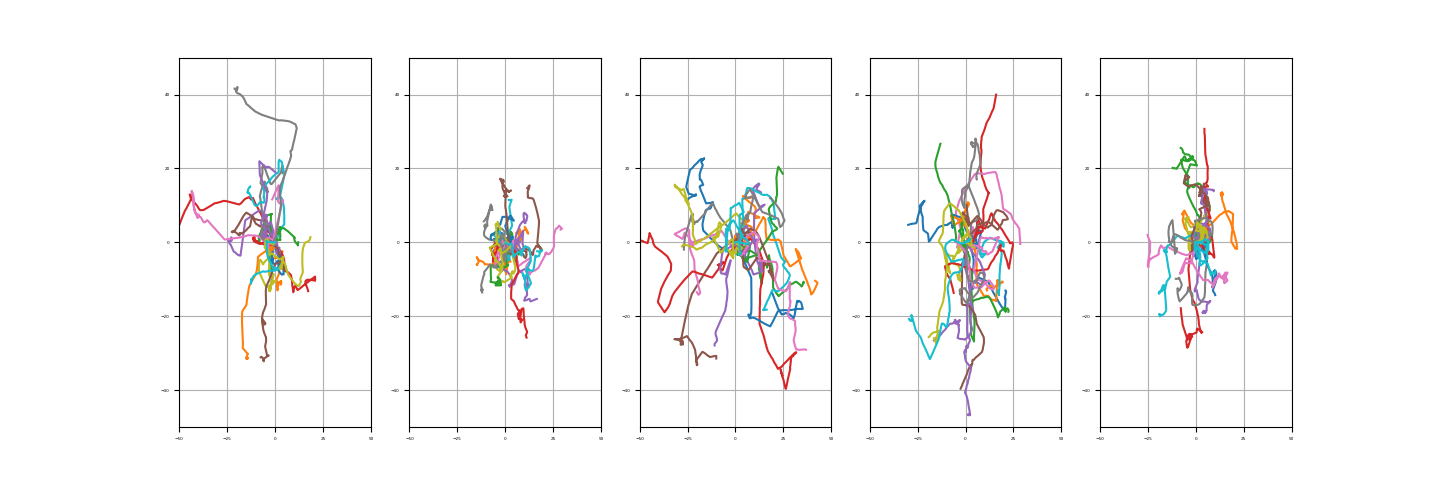}}
%\subfigure[NBP]{\includegraphics[width=.8\linewidth]{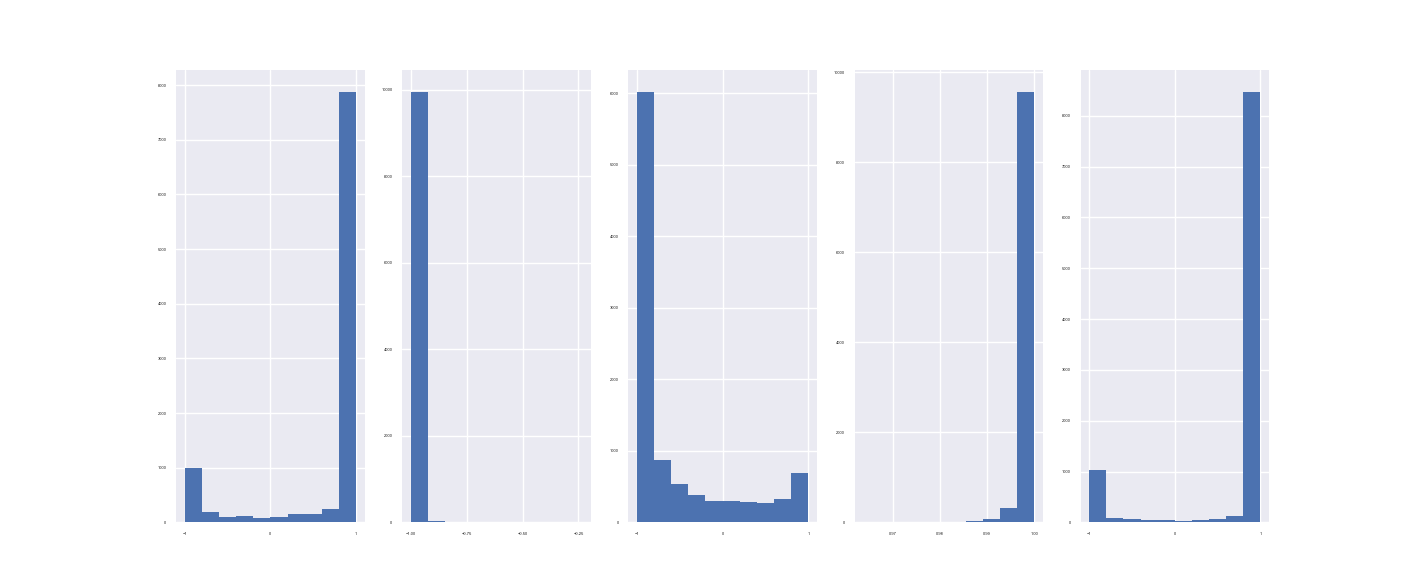}}
\caption{\small{(a)(b): Trajectories generated by DDPG pre-trained agents and NBP pre-trained agent on Ant-Running task under different random seeds. Starting from the initial position, DDPG agent can only produce a single deterministic trajectory due to the deterministic policy; NBP agent produces trajectories that are more diverse, illustrating that NBP agent learns to run in multiple directions.}}
\label{figure:appendix:trajs}
\end{figure}

\subsection{Multi-modal policy: Combining multiple modes by Imitation Learning.}
\paragraph{Didactic Example.} We motivate combining multiple modes of imitation learning with a simple example: imitating an expert with two modes of behavior. Consider a simple MDP on an axis with state space $\mathcal{S} = [-10,10]$, action space $\mathcal{A} = [-1,1]$. The agent chooses which direction to move and transitions according to the equation $s_{t+1} = s_t + a_t$.  We design an expert that commits itself randomly to one of the two endpoints of the state space $s = -10$ or $s = 10$ by a bimodal stochastic policy. We generate $10000$ trajectories from the expert and use them as training data for direct behavior cloning. 

We train a NBP agent using GAN training \cite{goodfellow2015}: given the expert trajectories, train a NBP as a generator that produces similar trajectories and train a separate classifier to distinguish true expert/generated trajectories. Unlike maximum likelihood, GAN training tends to capture modes of the expert trajectories. If we train a unimodal Gaussian policy using GAN training, the agent may commit to a single mode; below we show that trained NBP policy captures both modes.

\begin{figure}[h]
\centering
\subfigure[Actions by Expert]{\includegraphics[width=.32\linewidth]{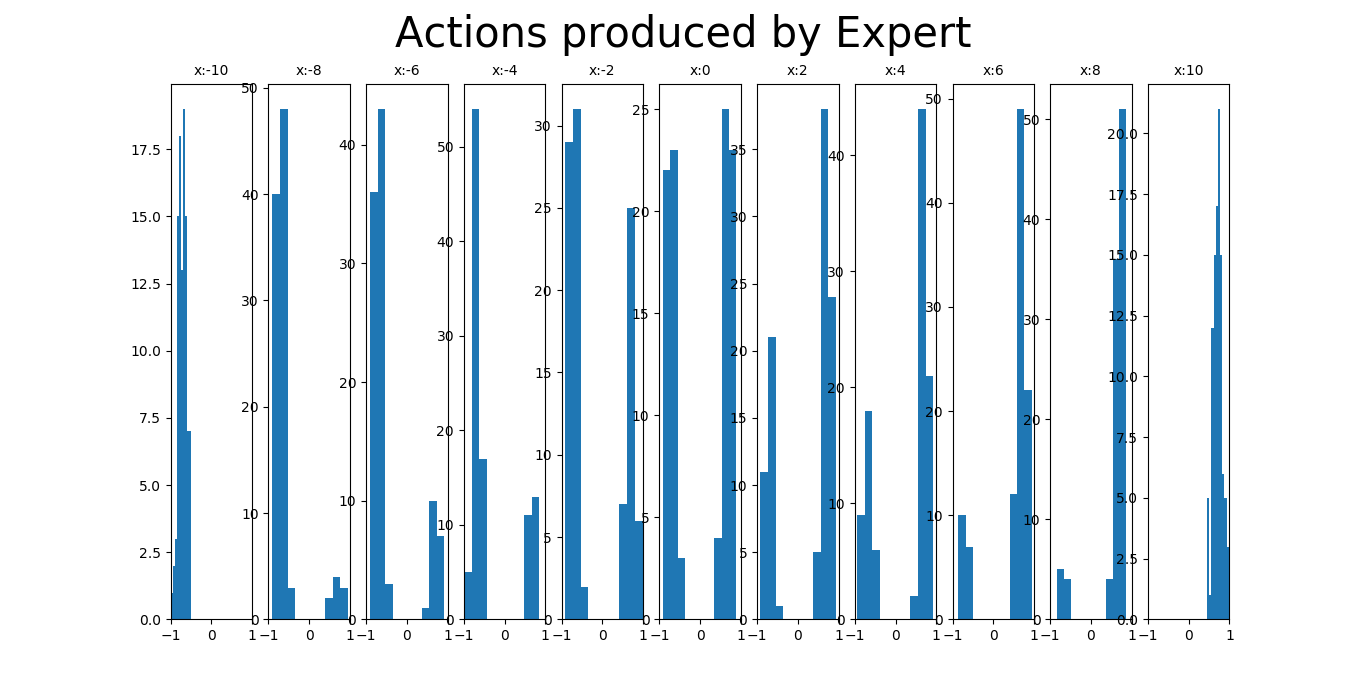}}
\subfigure[Actions by NBP agent]{\includegraphics[width=.32\linewidth]{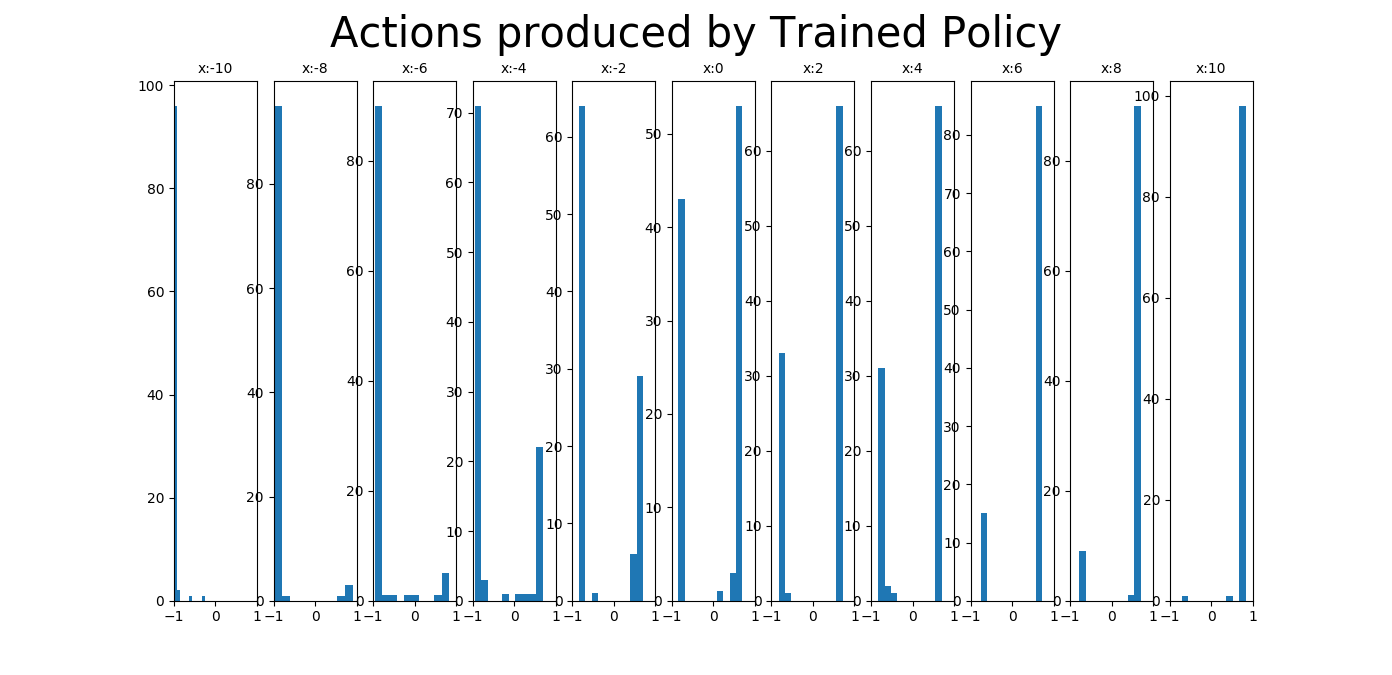}}\\
\subfigure[Trajectories by Expert]{\includegraphics[width=.32\linewidth]{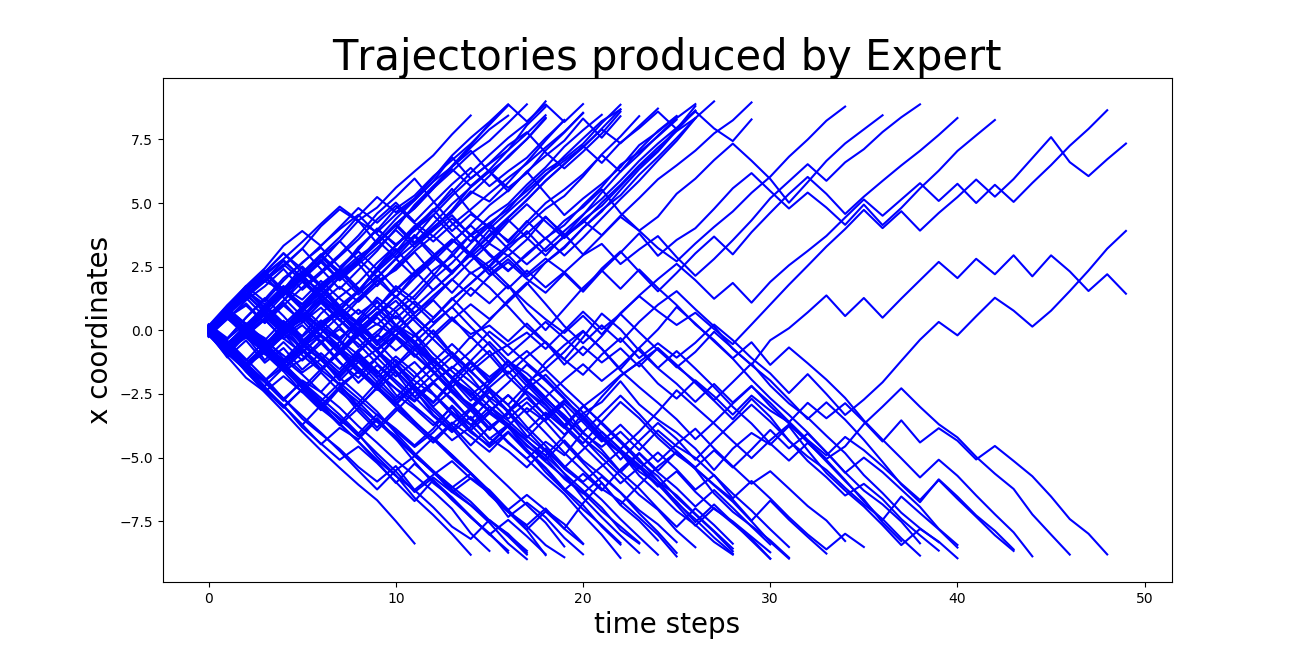}}
\subfigure[Trajectories by NBP agent]{\includegraphics[width=.32\linewidth]{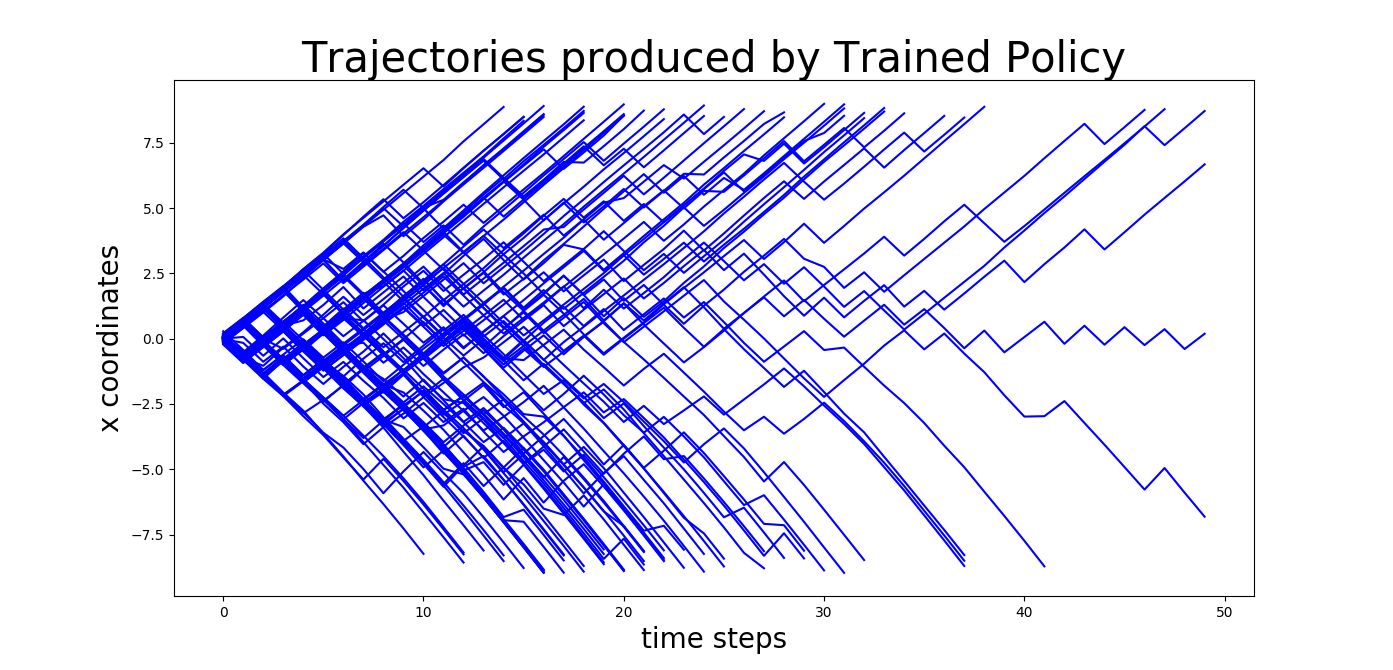}}
\caption{Imitating a bimodal expert: (a)(b) compare the actions produced by the expert and the trained NBP agent at different states $s$. The expert policy has a bimodal policy across different states and becomes increasingly unimodal when $s \approx \pm 10$; the trained policy captures such bimodal behavior. (c)(d) compare the trajectories of expert/trained agent. The vertical axis indicates the states $s$ and horizontal axis is the time steps in an episode, each trajectory is terminated at $s = \pm 10$. Trajectories of both expert and trained policy are very similar.}
\end{figure}
\paragraph{Stochastic Swimmer.} The goal is to train a Swimmer robot that moves either forward or backward. It is not easy to specify a reward function that directly translates into such bimodal behavior and it is not easy to train a bimodal agent under such complex dynamics even if the reward is available. Instead, we train two Swimmers using RL objectives corresponding to two deterministic modes: swimming forward and swimming backward. Combining the trajectories generated by these two modes provides a policy that stochastically commits to either swimming forward or backward. We train a NFP agent (with maximum likelihood behavior cloning \cite{finn2016}) and NBP agent (with GAN training \cite{goodfellow2015}) to imitate expert policy. The trajectories generated by trained policies show that the trained policies have fused these two modes of movement.

\begin{figure}[h]
\centering
\subfigure[Swimmer]{\includegraphics[width=.35\linewidth]{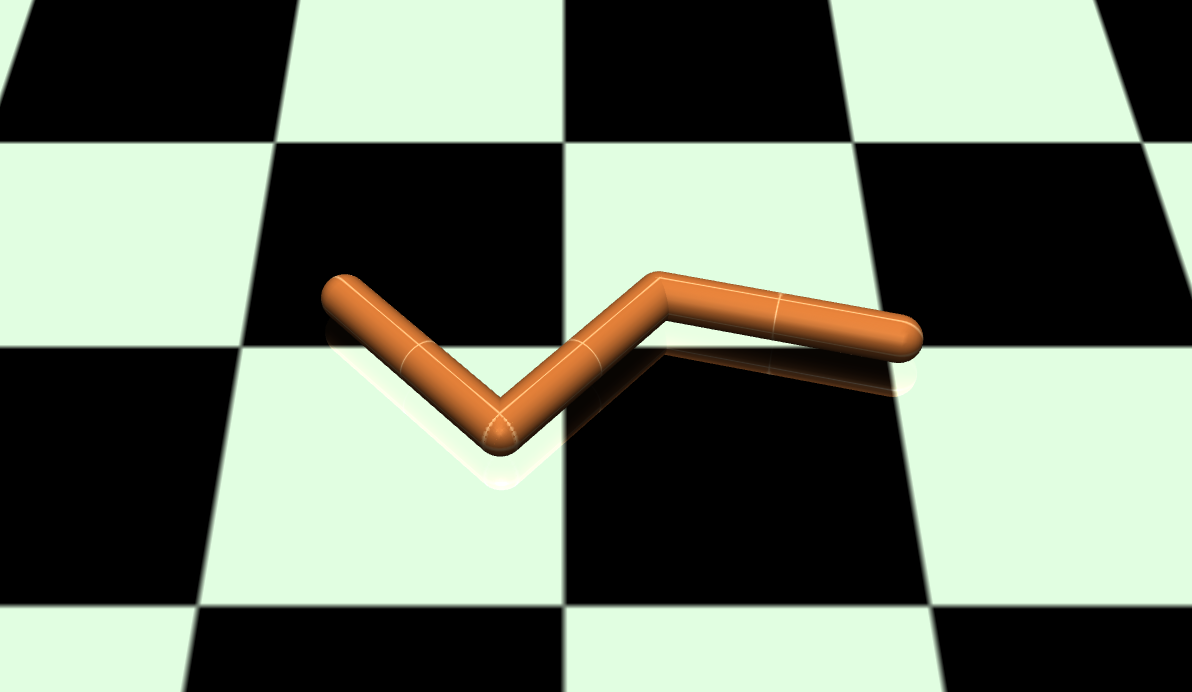}}
\subfigure[Trajectories by Expert]{\includegraphics[width=.5\linewidth]{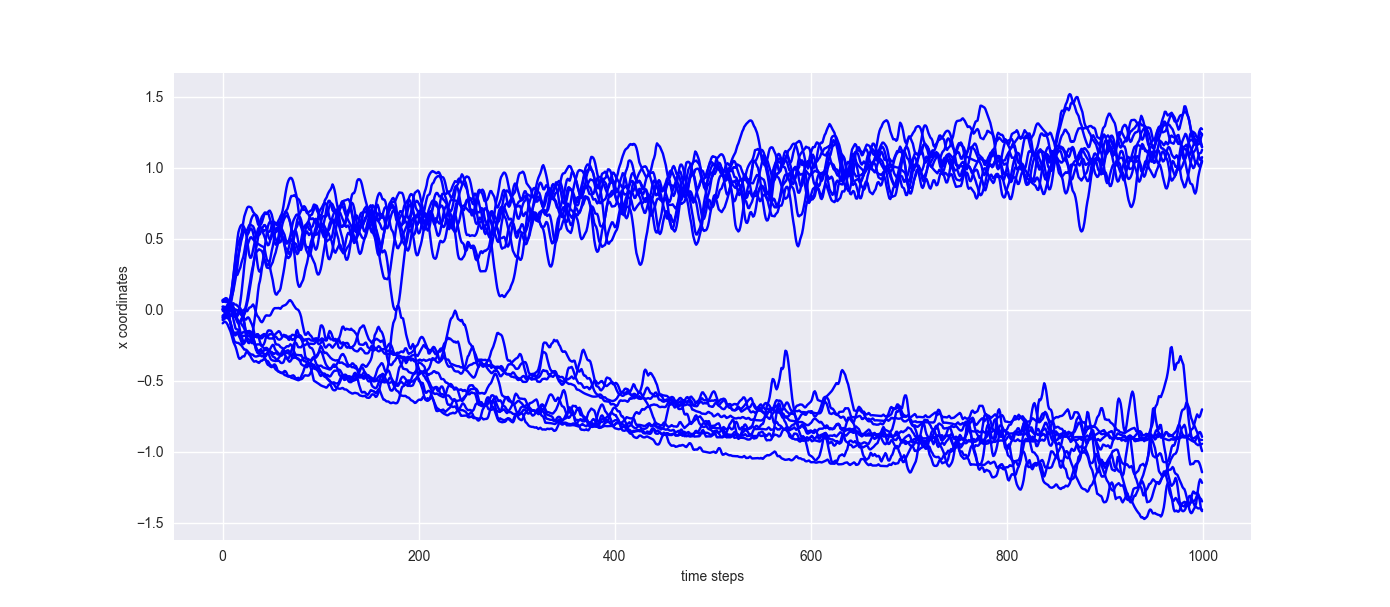}}\\
\subfigure[Trajectories by NBP agent]{\includegraphics[width=.6\linewidth]{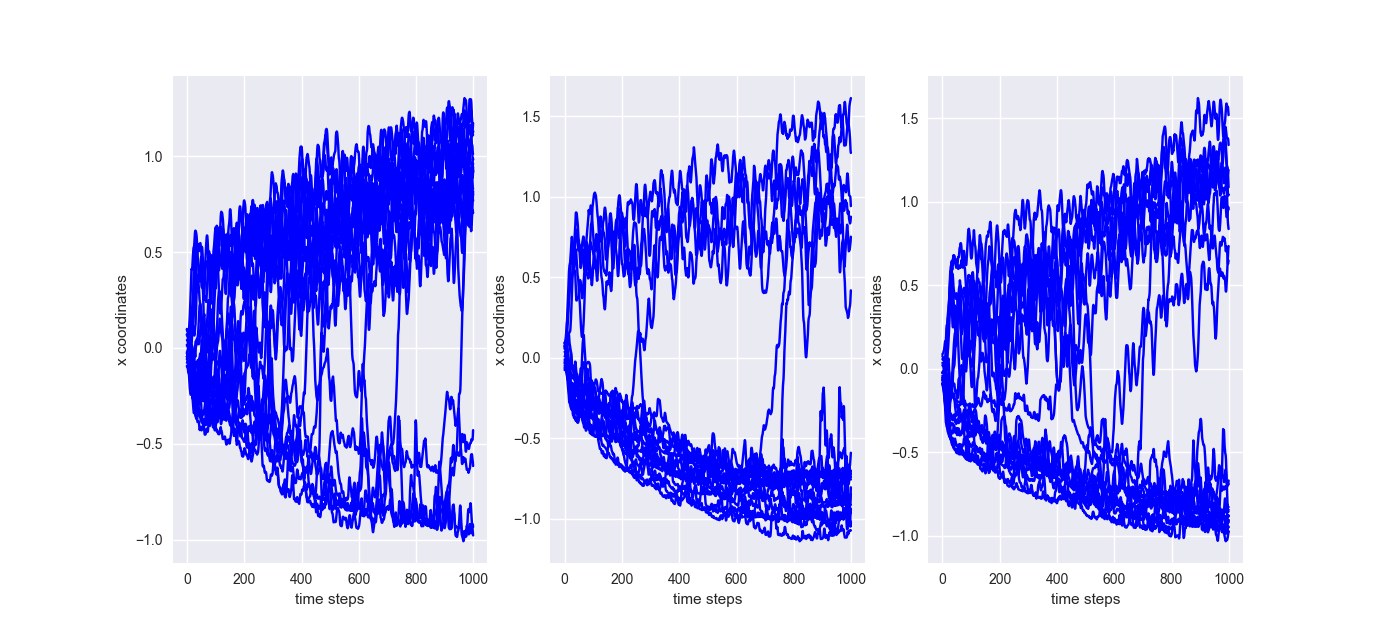}}\\
\subfigure[Trajectories by NFP agent]{\includegraphics[width=.6\linewidth]{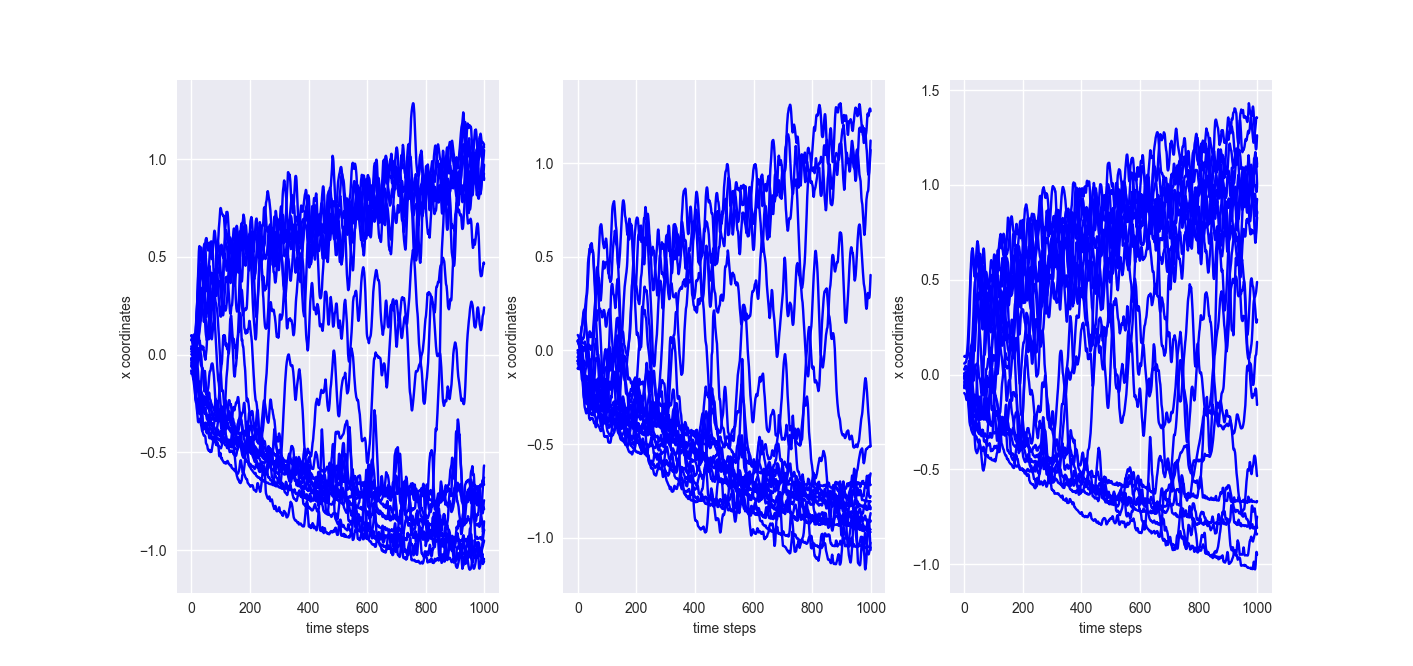}}
\caption{Combining multiple modes by Imitation Learning: stochastic Swimmer. (a) Illustration of Swimmer; (b) Expert trajectories produced by two Swimmers moving in two opposite directions (forward and backward). Vertical axis is the x coordinate of the Swimmer, horizontal axis is the time steps; (c) Trajectories produced by NBP agent trained using GAN under different seeds; (d) Trajectories produced by NFP agent trained using maximum likelihood under different seeds. Implicit policy agents have incorporated two modes of behavior into a single policy, yet a unimodal Gaussian policy can at most commit to one mode.}
\end{figure}

\section{Hyper-parameters and Ablation Study}
\paragraph{Hyper-parameters.} Refer to Appendix C for a detailed description of architectures of NFP and NBP and hyper-parameters used in the experiments. For NFP, critical hyper-parameters are entropy coefficient $\beta$, number of transformation layers $K$ and number of hidden units per layer $k$ for transformation function $s,t$. For NBP, critical hyper-parameters are entropy coefficient $\beta$ and the initialized variance parameter for factorized Gaussian $\rho$. In all conventional locomotion tasks, we set $\beta = 0.0$; for multi-modal policy tasks, we set $\beta \in \{0.1,0.01,0.001\}$. We use Adam \cite{kingma2014adam} for optimization with learning rate $\alpha \in \{3\cdot 10^{-5}, 3\cdot 10^{-4}\}$.

\paragraph{Ablation Study.} For NBP, the default baseline is $\beta = 0.0, \rho = -4.0$. We fix other hyper-parameters and change only one set of hyper-parameter to observe its effect on the performance. Intuitively, large $\rho$ encourages and widespread distribution over parameter $\theta$ and consequently and a more uniform initial distribution over actions. From Figure \ref{fig:ablationnbp} we see that the performance is not monotonic in $\rho,\beta$. We find the model is relatively sensitive to hyper-parameter $\rho$ and a general good choice is $\rho \in \{-4.0,-5.0,-6.0\}$. 

For NFP, the default baseline is $\beta = 0.0, K = 4, k = 3$. We fix other hyper-parameters and change only one set of hyper-parameter to observe its effect on the performance. In general, we find the model's performance is fairly robust to hyper-parameters (see Figure \ref{fig:ablationnfp}): large $K,k$ will increase the complexity of the policy but does not necessarily benefit performance on benchmark tasks; strictly positive entropy coefficient $\beta>0.0$ does not make much difference on benchmark tasks, though for learning multi-modal policies, adding positive entropy regularization is more likely to lead to multi-modality.
\begin{figure}[h]
\centering
\subfigure[Reacher: entropy]{\includegraphics[width=.3\linewidth]{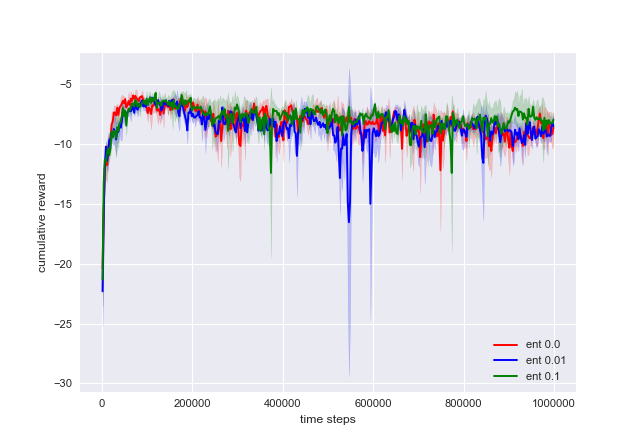}}
\subfigure[Reacher: initial Gaussian variance parameter]{\includegraphics[width=.3\linewidth]{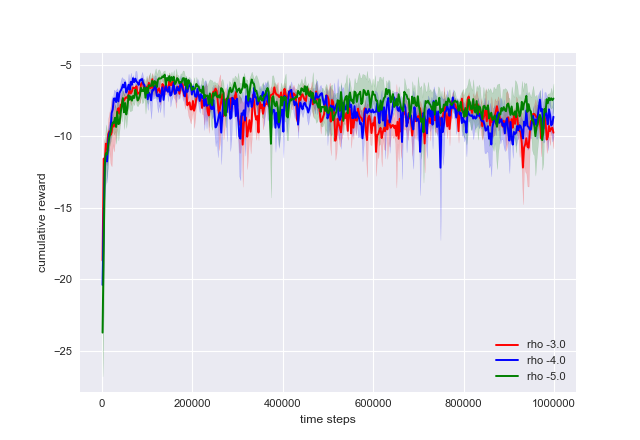}}\\
\subfigure[HalfCheetah: entropy]{\includegraphics[width=.3\linewidth]{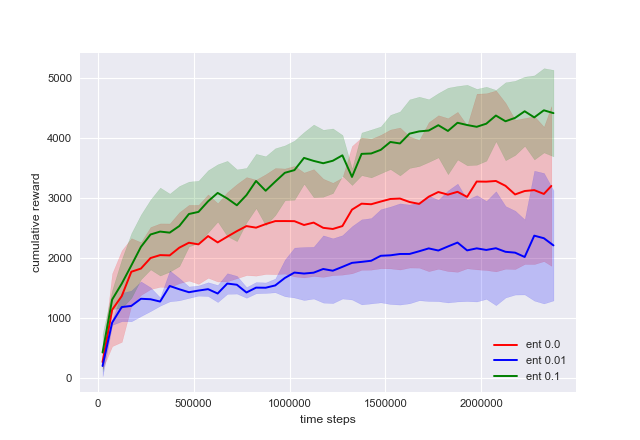}}
\subfigure[HalfCheetah: initial Gaussisn variance parameter]{\includegraphics[width=.3\linewidth]{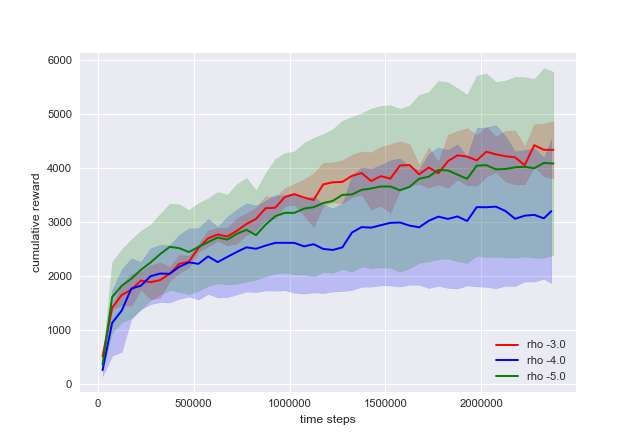}}
\caption{Ablation study: NBP}
\label{fig:ablationnbp}
\end{figure}
\begin{figure}[h]
\centering
\subfigure[Hopper: entropy]{\includegraphics[width=.3\linewidth]{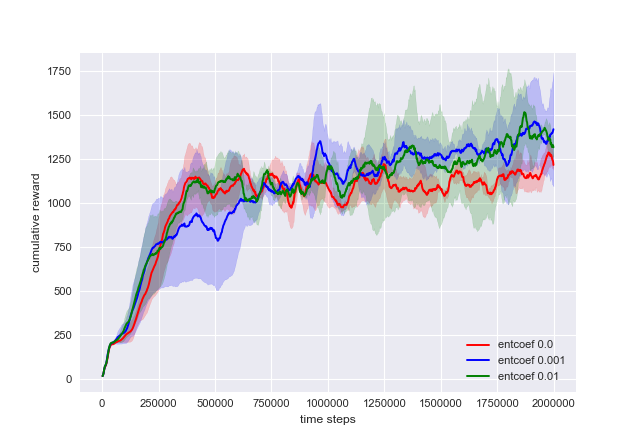}}
\subfigure[Hopper: \# of layers]{\includegraphics[width=.3\linewidth]{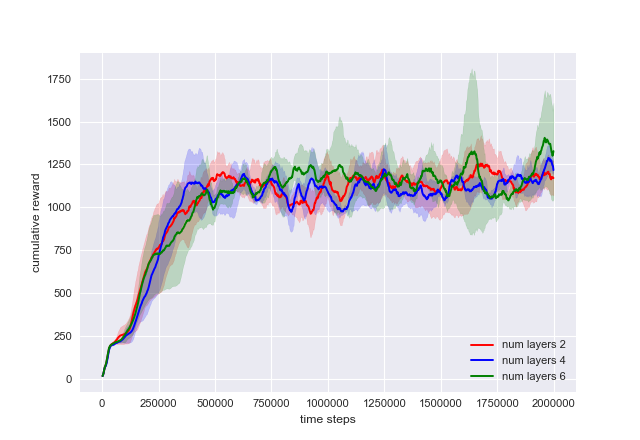}}
\subfigure[Hopper: \# of units]{\includegraphics[width=.3\linewidth]{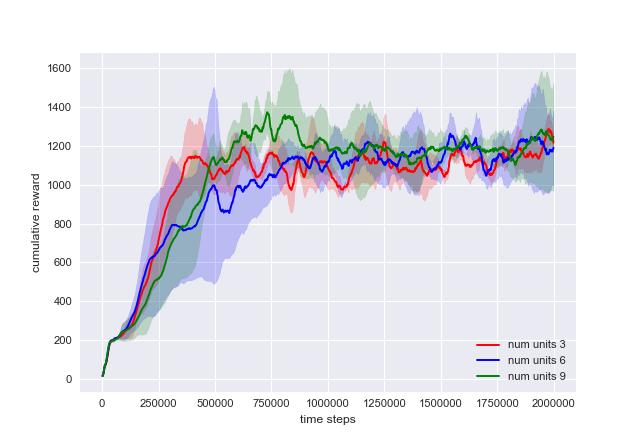}}\\
\subfigure[Reacher: entropy]{\includegraphics[width=.3\linewidth]{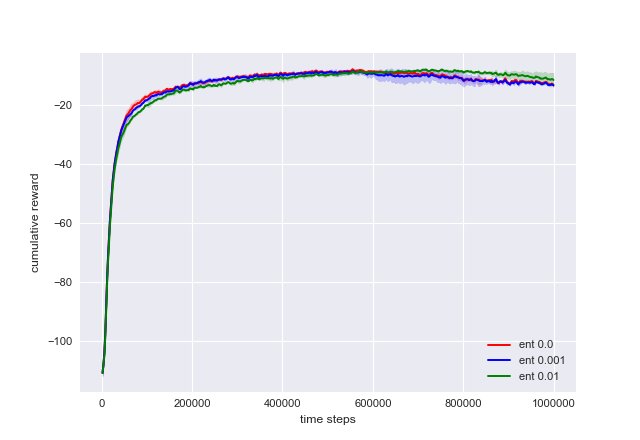}}
\subfigure[Reacher: \# of layers]{\includegraphics[width=.3\linewidth]{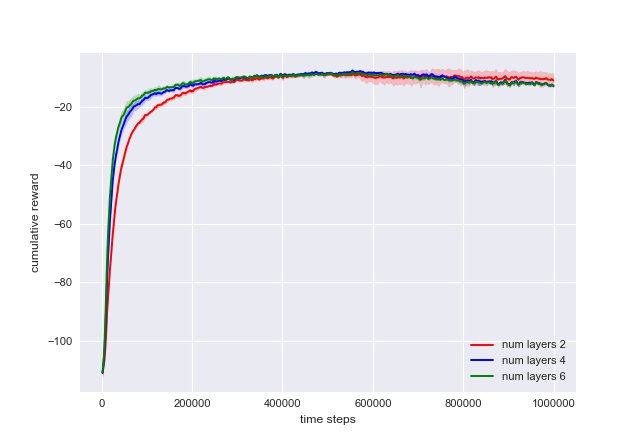}}
\subfigure[Reacher: \# of units]{\includegraphics[width=.3\linewidth]{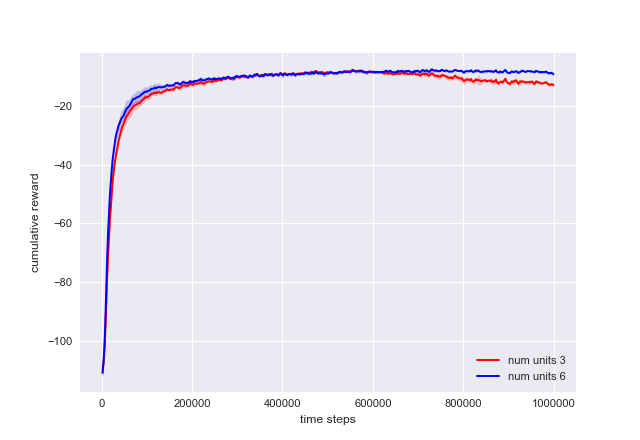}}\\
\subfigure[HalfCheetah: entropy]{\includegraphics[width=.3\linewidth]{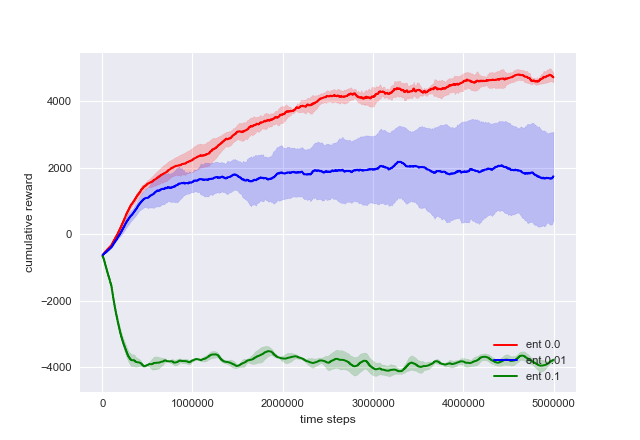}}
\subfigure[HalfCheetah: \# of layers]{\includegraphics[width=.3\linewidth]{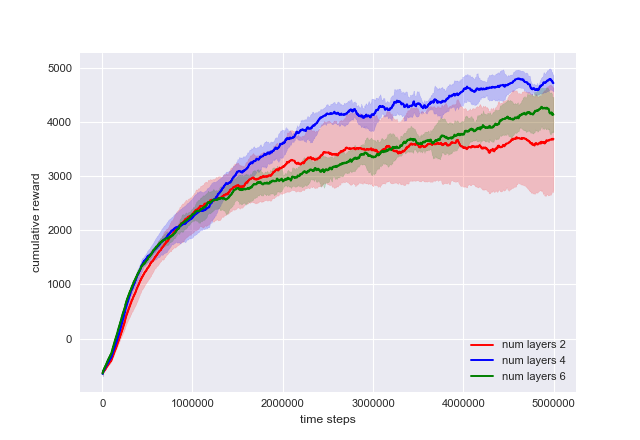}}
\subfigure[HalfCheetah: \# of units]{\includegraphics[width=.3\linewidth]{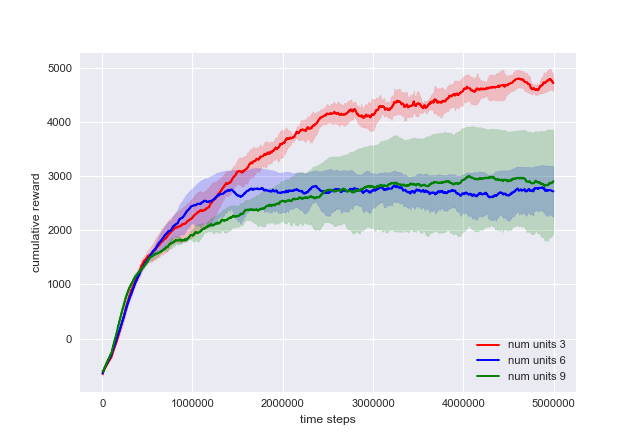}}
\caption{Ablation study: NFP}
\label{fig:ablationnfp}
\end{figure}

\end{document}